\newtheorem{algm}{Algorithm}
\newtheorem{lem}{Lemma}[section]
\newtheorem{thm}{Theorem}[section]
\newtheorem{rema}{Remark}[section]
\newenvironment{proof}{{\bf Proof: \/}}
{\vspace{-.3in}\begin{flushright}{\vrule height5pt width3pt
depth0pt} \end{flushright}}
{\vspace{-.3in}\begin{flushright}{\vrule height5pt width3pt
depth0pt} \end{flushright}}
\newcommand{\eqn}[1]{
\begin{eqnarray}
#1
\end{eqnarray}
}
\newcommand{\tb}[1]{
{\bf #1}
}
\newcommand{\norm}[1]{
\left\lVert #1\right\rVert
}
\newcommand{\ip}[1]{
\left\langle #1 \right\rangle
}
\newcommand{\bs}[1]{
\boldsymbol #1
}
\title{Efficient kernel surrogates for neural network-based regression}
\author[1]{Saad Qadeer} 
\author[1]{Andrew Engel}
\author[1]{Amanda Howard}
\author[1,2]{Adam Tsou}
\author[1]{Max Vargas}
\author[1,3,4, *]{Panos Stinis}
\author[1,3,5, *]{Tony Chiang} 
\affil[1]{Pacific Northwest National Laboratory}
\affil[2]{Stony Brook University}
\affil[3]{The University of Washington}
\affil[4]{Brown University}
\affil[5]{The University of Texas, El Paso}
\affil[*]{{\tt panagiotis.stinis@pnnl.gov}, {\tt tony.chiang@pnnl.gov}}
\begin{document}

\maketitle

\begin{abstract}
Despite their immense promise in performing a variety of learning tasks, a theoretical understanding of the effectiveness and limitations of Deep Neural Networks (DNNs) has so far eluded practitioners. This is partly due to the inability to determine the closed forms of the learned functions, making it harder to assess their precise dependence on the training data and to study their generalization properties on unseen datasets. Recent work has shown that randomly initialized DNNs in the infinite width limit converge to kernel machines relying on a Neural Tangent Kernel (NTK) with known closed form. These results suggest, and experimental evidence corroborates, that empirical kernel machines can also act as surrogates for finite width DNNs. The high computational cost of assembling the full NTK, however, makes this approach infeasible in practice, motivating the need for low-cost approximations. In the current work, we study the performance of the Conjugate Kernel (CK), an efficient approximation to the NTK that has been observed to yield fairly similar results. For the regression problem of smooth functions and classification using logistic regression, we show that the CK performance is only marginally worse than that of the NTK and, in certain cases, is shown to be superior. In particular, we establish bounds for the relative test losses, verify them with numerical tests, and identify the regularity of the kernel as the key determinant of performance. In addition to providing a theoretical grounding for using CKs instead of NTKs, our framework provides insights into understanding the robustness of the various approximants and suggests a recipe for improving DNN accuracy inexpensively. We present a demonstration of this on the foundation model GPT-2 by comparing its performance on a classification task using a conventional approach and our prescription. We also show how our approach can be used to improve physics-informed operator network training for regression tasks as well as  convolutional neural network training for vision classification tasks.

\end{abstract}

\noindent \textbf{Keywords:}  Finite-width neural networks, Empirical kernel machines, Neural Tangent Kernel, Function regression, Logistic regression, Generalization errors, Foundation models

\section{Introduction}\label{SecIntro}

Deep Neural Networks (DNNs) have shown immense promise in performing a variety of learning tasks including, among others, image classification \cite{krizhevsky2012imagenet,he2016deep}, natural language processing \cite{mikolov2013efficient,kim2014convolutional,vaswani2017attention,kenton2019bert}, function approximation \cite{yarotsky2018optimal,daubechies2022nonlinear}, solving differential equations \cite{karniadakis2021physics,lu2021learning,kovachki2021universal}, etc. However, a theoretical understanding of their effectiveness and limitations has proven elusive, thereby hindering their universal acceptance in practical applications and limiting their usage in scientific computing. This is partly down to their somewhat unwieldy architectures and complex loss landscapes that do not permit precise closed-form characterizations of the optimally learned functions, thus making it harder to assess their precise dependence on the training data and to study their generalization properties on test sets.

Recent work has established that randomly initialized DNNs in the infinite width limit converge to kernel machines relying on a deterministic Neural Tangent Kernel (NTK) \cite{lee2019wide,arora2019exact}. The NTK for a DNN, defined as the Gram matrix of the Jacobian of the DNN with respect to the network parameters, controls the evolution of the DNN training \cite{jacot2018neural}. In the infinite width limit, the NTK can be shown to not deviate from its known closed form. As a consequence, the result of fully training an infinite width DNN can be {\it a priori} identified as a kernel machine employing the known NTK. These insights also allow one to study the performance of the DNN architecture in the extremal over-parameterized limit and benefit from the double-descent phenomenon \cite{liu2020toward,bartlett2021deep,belkin2021fit}.

It should be emphasized that this limiting regime does not leverage the changes in the DNN during training and hence fails to make use of any features learned by the architecture. However, these results suggest the tantalizing possibility that empirical kernel machines could also act as surrogates for finite width DNNs. As shown elsewhere \cite{engel2023robust} as well as later in the article, under certain assumptions this claim is not only corroborated by strong evidence, both mathematical (equation \eqref{NN_NTK} and Sections \ref{SecKLMathAnalysis} and \ref{SecKRMathAnalysis}) and experimental (Section \ref{SecNumTests}), but also considerably improved upon. This enables a precise determination of the superior learned functions and a thorough study of their performance on unseen data.

The high computational cost of assembling the complete NTK for DNNs employed in practical applications \cite{novak2018bayesian,novak2022fast}, however, makes this approach infeasible, motivating the need for low cost approximations. In this article, we study the performance of the Conjugate Kernel (CK) \cite{fan2020spectra}, a ``zeroth-order'' approximation to the NTK, that has been observed to yield fairly similar results (see Section \ref{SecNumTests}). For the problems of regression of smooth functions and logistic regression, we prove that the CK approximations are only marginally worse than the NTK approximations and, in some cases, are much superior. In particular, we establish bounds for the errors, verify them with numerical tests, and identify the regularity of the kernel as the key determinant of performance. In addition to providing a theoretical grounding for using CKs instead of NTKs, our framework provides insights into understanding the robustness of the various approximations and suggests a recipe for boosting DNN and physics-informed Deep Operator Network (DeepONet, \cite{lu2021learning, wang2021learning}) accuracy inexpensively. We duly present a demonstration of this on GPT-2 by comparing its performance on a classification task using a conventional approach and our prescription (see Section \ref{SecFoundModels}).

\section{Preliminaries}\label{SecPrelim}

In this section, we introduce our notation and the define the basic terms and tools. We denote by $f_\text{NN}: \mathbb{R}^{d_0} \to \mathbb{R}^{d_{L+1}}$ a fully connected network (FCN) with $L$ hidden layers of the form
\eqn{
\bs{z}^{(l)} &=& \sigma \left(W^{(l)}\bs{z}^{(l-1)} + \tb{b}^{(l)} \right), \quad 1 \leq l \leq L, \nonumber\\
\bs{z}^{(L+1)} &=& W^{(L+1)}\bs{z}^{(L)} + \tb{b}^{(L+1)}. \label{NN}
}

Here, $\sigma$ is the activation function, $W^{(l)} \in \mathbb{R}^{d_l \times d_{l-1}}$ and $\tb{b}^{(l)} \in \mathbb{R}^{d_l}$ are the trainable parameters, $\bs{z}^{(0)} \in Z$ is the input, and $\bs{z}^{(L+1)} \in \mathbb{R}^{d_{L+1}}$ is the output, i.e., $f_\text{NN}\left(\bs{z}^{(0)}\right) = \bs{z}^{(L+1)}$. We denote all the trainable parameters $\{W^{(l)},\tb{b}^{(l)} \}_{1 \leq l \leq L+1}$ by $\theta$ and their number by $|\theta|$ for brevity, and indicate the dependence of the neural network on them by writing $f_\text{NN}^{\theta}$. Since this is the only type of neural network (NN) we shall consider in this article, henceforth we refer to FCNs of this type as NNs.

Architectures of this type can be deployed for various tasks. For instance, let $f:Z \to \mathbb{R}$ be a given function, where $Z$ is a compact subset of $\mathbb{R}^{d_0}$, with training input-output pairs $\{(\bs {x}_i,y_i)\}_{0 \leq i \leq N} \subset Z \times \mathbb{R}$ (so that $y_i = f(\bs{x}_i)$). In order to solve the function regression problem, the parameters can be trained so as to minimize the weighted squared loss
\eqn{
L_{0,\text{NN}}\left(\theta\right) = \sum_{i = 0}^N w_i\left| y_i - f^{\theta}_\text{NN}\left(\bs{x}_i\right) \right|^2, \label{RSqLoss}
}
for some non-negative weights $\{w_i\}_{0 \leq i \leq N}$, and yield an approximation $f_\text{NN}^{\theta^*}$ to the target function. We further define the norms
\eqn{
\norm{g}_0 := \left(\sum_{i = 0}^N w_i\left|g(\bs{x}_i)\right|^2 \right)^{1/2}, \qquad \norm{g}_1 := \left(\sum_{i = 0}^M v_i\left|g(\bs{y}_i)\right|^2 \right)^{1/2} \label{NormDefn}
}
for any $g :Z \to \mathbb{R}^{d_{L+1}}$, where $\{(\bs{y}_i,v_i)\}_{0 \leq i \leq M}$ is a collection of test nodes and associated non-negative weights. The training and test errors are then $\norm{f - f^{\theta^*}_\text{NN}}_0 = \left[L(\theta^{*}) \right]^{1/2}$ and $\norm{f - f_\text{NN}^{\theta^*}}_1$ respectively.

Similarly, we can employ NNs for classifications tasks. Let $\{\left({\bs x}_{i},\chi_i\right)\}_{0 \leq i \leq N} \subset Z \times \{0,1\}$ be the training set, where $\chi_i$ is the class label assigned to $\bs{x}_i$. Set $d_{L+1} = 2$ (so that $f^\theta_\text{NN}:\mathbb{R}^{d_0} \to \mathbb{R}^2$) and apply the softmax function to get
\eqn{
p^{\theta}_{\text{NN}}(\bs{z}) = \frac{\exp\left(f^{\theta}_{\text{NN},1}(\bs{z})\right)}{\sum_{i = 1}^2 \exp\left(f^{\theta}_{\text{NN},i}(\bs{z})\right)} = \frac{1}{1 + \exp\left[-\left(f^{\theta}_{\text{NN},1}(\bs{z}) - f^{\theta}_{\text{NN},2}(\bs{z}) \right)\right]}. \label{LPredNN}
}

This is the likelihood, according to this model, of $\bs{z} \in Z$ belonging to the class labelled $1$. The training requires the minimization of the logistic loss (i.e., the cross-entropy)
\eqn{
{\ell}_{0,\text{NN}}(\theta) = -\sum_{i = 0}^{N} \chi_{i}\ln\left(p^{\theta}_\text{NN}(\bs{x}_{i})\right) + (1 - \chi_{i})\ln\left(1 - p^{\theta}_\text{NN}(\bs{x}_{i})\right), \label{LNNTrLoss}
}
with the corresponding test loss
\eqn{
{\ell}_{1,\text{NN}}(\theta) = -\sum_{i = 0}^{M} \mu_{i}\ln\left(p^{\theta}_\text{NN}(\bs{y}_{i})\right) + (1 - \mu_{i})\ln\left(1 - p^{\theta}_\text{NN}(\bs{y}_{i})\right), \label{LNNTeLoss}
}
where $\{\mu_i\}_{0 \leq i \leq M} \subset \{0,1\}$ are the class labels of the test nodes.

These tasks can also be performed with the aid of kernel methods \cite{boser1992training,meanti2020kernel}. Given a valid kernel function $\text{ker}: Z \times Z \to \mathbb{R}$, define the kernel matrix $H_\text{ker} = \begin{pmatrix} \text{ker}(\bs{x}_i , \bs{x}_j)
\end{pmatrix}_{0 \leq i,j \leq N}$, and the weighted kernel matrix $\widehat{H}_\text{ker} = W^{1/2}H_\text{ker}W^{1/2}$, where $W^{1/2} = \text{diag}(\sqrt{w_0},\hdots,\sqrt{w_N})$. The corresponding kernel approximation for the function regression problem above seeks $\bs{\delta}^{*,\text{ker}} \in \mathbb{R}^{N+1}$ to minimize
\eqn{
L_{0,\text{ker}}(\bs{\delta}) = \sum_{i = 0}^N w_i\left| y_i - \sum_{j = 0}^N\bs{\delta}_j \text{ker}(\bs{x}_j,\bs{x}_i) \right|^2. \label{KRTrLoss}
} 

It possesses the closed form solution
\eqn{
f_\text{ker}(\bs{z}) = \bs{y}^\top W^{1/2} \left(\widehat{H}_\text{ker}\right)^\dagger W^{1/2} \begin{pmatrix}
\text{ker}(\bs{x}_j, \bs{z})
\end{pmatrix}_{0 \leq j \leq N}, \label{KRApprox}
} 
where $\bs{y} = \begin{pmatrix} y_i \end{pmatrix}_{0 \leq i \leq N}$ contains the training target values, with training and test losses $\norm{f - f_\text{ker}}_0$ and $\norm{f - f_\text{ker}}_1$ respectively.

The logistic regression problem can likewise be solved by finding an $\bs{\alpha}^{*,\text{ker}} \in \mathbb{R}^{N+1}$ that minimizes
\eqn{
\ell_{0,\text{ker}}(\bs{\alpha}) = \sum_{i = 0}^{N}  \chi_{i} \ln\left[1 + \exp(-H_{\text{ker},(i,:)}\bs{\alpha})\right] + (1-\chi_{i})\ln\left[1 + \exp(H_{\text{ker},(i,:)}\bs{\alpha})\right] , \label{KLTrLoss} 
} 
thus yielding the likelihood
\eqn{
p_\text{ker}(\bs{z}) = \left[1 + \exp\left(-\sum_{j = 0}^{N} \text{ker}(\bs{z},\bs{x}_{j}) \bs{\alpha}^{*,\text{ker}}_{j}\right) \right]^{-1}, \label{KLPred}
}
according to this classifier, of $\bs{z} \in Z$ lying in class 1. The test loss $\ell_{1,\text{ker}}$ is defined similarly to \eqref{LNNTeLoss} with \eqref{LPredNN} replaced by \eqref{KLPred}.

In case $\text{ker}$ is induced by a feature map $\Phi_\text{ker}:Z \to \mathbb{R}^{K}$, so that $\text{ker}(\bs{x},\bs{z}) = \Phi_\text{ker}(\bs{x})^\top\Phi_\text{ker}(\bs{z})$, \eqref{KRApprox} can be understood as an orthogonal projection on $\mathcal{S}_\text{ker} = \text{span}\left(\left\{\Phi_{\text{ker},k}\right\}_{k = 1}^K \right)$ with respect to $\ip{\cdot,\cdot}_0$, the inner product that induces $\norm{\cdot}_0$; a technique for obtaining an optimal representation of this projection is presented in Appendix \ref{AppBFs}. Equivalently, we can interpret \eqref{KRApprox} as weighted linear regression on the data-points $\left\{ \left(\Phi_\text{ker}(\bs{x}_i) , y_i\right) \right\}_{0 \leq i \leq N}$, i.e., it implicitly finds $\bs{\gamma}^{*,\text{ker}} \in \mathbb{R}^K$ that minimizes the training loss
\eqn{
\widetilde{L}_{0,\text{ker}}(\bs{\gamma}) = \sum_{i = 0}^N w_i\left| y_i - \bs{\gamma}^\top \Phi_\text{ker}(\bs{x}_i) \right|^2, \label{KREqTrLoss}
} 
so the approximation \eqref{KRApprox} can also be written as 
\eqn{
f_\text{ker}(\bs{z}) = \left(\bs{\gamma}^{*,\text{ker}}\right)^\top\Phi_\text{ker}(\bs{z}). \label{KREqApprox}
}

This equivalence between the apparently more general form $\bs{\gamma}^\top\Phi_\text{ker}(\bs{z})$ in \eqref{KREqApprox} and the kernel form $\bs{\delta}^\top \begin{pmatrix} \text{ker}(\bs{x}_j , \bs{z}) \end{pmatrix}_{0 \leq j \leq N}$ in \eqref{KRApprox} can be viewed as a consequence of the representer theorem \cite{scholkopf2001generalized} or, equivalently, as a corollary of the (easily proven) fact that 
\eqn{
\text{range}(A^\top A) = \text{range}(A^\top), \label{RangeRes}
}
for any matrix $A$: applied to the weighted feature map matrix $\widehat{\Xi}_\text{ker} = \left(\Phi_{\text{ker},m}(\bs{x}_{i})w_i^{1/2}\right)_{1 \leq m \leq K, 0 \leq i \leq N} \in \mathbb{R}^{K \times (N+1)}$, we deduce that for an optimal $\bs{\gamma}^{*,\text{ker}}$, there exists some $\bs{\delta}^{*,\text{ker}} \in \mathbb{R}^{N+1}$ such that
\eqn{
\widehat{\Xi}_\text{ker}^\top\bs{\gamma}^{*,\text{ker}} = \widehat{\Xi}_\text{ker}^\top \widehat{\Xi}_\text{ker}\bs{\delta}^{*,\text{ker}} = \widehat{H}_\text{ker}\bs{\delta}^{*,\text{ker}}, \label{GammaDeltaRel}
}
since $\widehat{H}_\text{ker} = \widehat{\Xi}_\text{ker}^\top \widehat{\Xi}_\text{ker}$, and hence that we can restrict our search for $\bs{\gamma}^{*,\text{ker}}$ in $\text{range}\left(\widehat{\Xi}_\text{ker}\right)$. This implies that
\eqn{
\norm{f - f_\text{ker}}_\nu^2 = \widetilde{L}_{\nu,\text{ker}}(\bs{\gamma^{*,\text{ker}}}), \label{KREqLosses}
}
for $\nu = 0,1$, where the test loss $\widetilde{L}_{1,\text{ker}}$ is defined in the same manner as \eqref{KREqTrLoss} with the test points and weights replacing the training ones.

Similarly, solving the kernel logistic problem can be interpreted as a linear classifier in the feature space, i.e., identifying the decision boundary 
\eqn{
\left\{\bs{x} \in \mathbb{R}^{d_0}: \left(\bs{\beta^{*,\text{ker}}}\right)^\top \Phi_\text{ker}(\bs{x}) = 0\right\}, \label{KLBndry}
}
which can be seen as a hyperplane in $\mathbb{R}^K$, by finding $\bs{\beta}^{*,\text{ker}} \in \mathbb{R}^K$ that minimizes
\eqn{
\widetilde{\ell}_{0,\text{ker}}(\bs{\beta}) = \sum_{i = 0}^{N} \chi_{i} \ln\left[1 + \exp\left(-\bs{\beta}^\top \Phi_\text{ker}(\bs{x}_i) \right)\right] + (1-\chi_{i})\ln\left[1 + \exp\left(\bs{\beta}^\top \Phi_\text{ker}(\bs{x}_i) \right)\right]. \label{KLEqTrLoss} 
} 

Following an argument similar to that used above, we have
\eqn{
\ell_{\nu,\text{ker}}(\bs{\alpha}^{*,\text{ker}}) = \widetilde{\ell}_{\nu,\text{ker}}(\bs{\beta}^{*,\text{ker}}), \label{KLEquivLosses}
}
for $\nu = 0,1$.


\section{Problem formulation}\label{SecProb}
Given a (fully or partially trained or untrained) neural network $f_\text{NN}:\mathbb{R}^{d_0} \to \mathbb{R}$, the corresponding Neural Tangent Kernel (NTK) is the kernel induced by the feature map
\eqn{
\Phi_{\text{NTK},k} := \frac{\partial f^{\theta}_\text{NN}}{\partial \theta_k}, \quad 1 \leq k \leq |\theta|, \label{NTKFeaMap}
}
i.e., the Jacobian of the neural network with respect to the trainable parameters \cite{jacot2018neural}. In the infinite width limit, it has been established that, over the course of training a randomly initialized NN, the NTK possesses an unchanging closed form, and that the NN converges to the corresponding NTK machine \cite{lee2019wide}. This highlights the linear dependence of the NN on the underlying parameters, to wit,
\eqn{
f^{\theta}_\text{NN} = f^{\theta_0}_\text{NN} + \left(\theta - \theta_0\right)^\top\left.\left(\frac{\partial f^{\theta}_\text{NN}}{\partial \theta_k}\right)\right|_{\theta = \theta_0}. \label{NNTaylor}
} 

In fact, since $f^{\theta_0}_\text{NN}$ is a linear combination of the neurons in the last hidden layer (from \eqref{NN}) and which correspond the Jacobian entries corresponding to the last layer of parameters, it can be subsumed in the second term in \eqref{NNTaylor} to yield
\eqn{
f^{\theta}_\text{NN}  = \bs{\gamma}^\top \Phi_\text{NTK} \label{NN_NTK}
}
for some $\bs{\gamma} \in \mathbb{R}^{|\theta|}$. Since this is precisely the form of an NTK approximation (from the equivalence of \eqref{KRApprox} and \eqref{KREqApprox}), we deduce that, in the infinite width limit, a fully trained NN is identical to an optimal regressor employing the NTK. 

However, using this insight to obtain performance guarantees and assess limitations of finite width network runs into two problems. First, the assumption of linear dependence on parameters in \eqref{NNTaylor} does not hold \cite{liu2020linearity}, so the optimal NTK machine can only be viewed as the best approximator in the tangent space of the NN at $\theta_0$, and not on the manifold of all NNs. Secondly, computing the NTK in practice is a computationally daunting task \cite{novak2018bayesian,novak2022fast}. In the infinite width regime and under random initialization, a closed form for the NTK can be calculated. However, empirical kernels do not possess such characterizations in general, necessitating the use of low-cost approximations.

The Conjugate Kernel (CK) is defined as the kernel induced by the Jacobian only with respect to the parameters $\{W^{(L+1)},\tb{b}^{(L+1)}\}$ in the last layer \cite{fan2020spectra}. Consequently, its feature map can be seen as a truncation of the full Jacobian employed by the NTK. Observe that we have set $d_{L+1} = 1$ so $W^{(L+1)} \in \mathbb{R}^{d_L}$ and $\tb{b}^{L+1} \in \mathbb{R}$, and hence
\eqn{
\Phi_{\text{CK},k} := \left\{\begin{matrix}
\frac{\partial f^{\theta}_\text{NN}}{\partial W^{(L+1)}_k}, & \text{if } 1\leq k \leq d_L, \\ \\ 
\frac{\partial f^{\theta}_\text{NN}}{\partial \tb{b}^{(L+1)}}, & \text{if } k = d_L+1. \\
\end{matrix} \right. \label{CKFeaMap}
}

A moment's thought then reveals that
\eqn{
\text{CK}(\bs{z},\widetilde{\bs{z}}) := \Phi_{\text{CK}}(\bs{z})^\top \Phi_{\text{CK}}(\widetilde{\bs{z}}) = 1 + \bs{z}^{(L)} \cdot \widetilde{\bs{z}}^{(L)}. \label{CKDefn}
}

\begin{rema}\label{RemaKCla}
For an NN trained for the binary classification problem described in the previous section, we have $d_{L+1} = 2$. In this setting, we define the NTK and CK for $f_\text{diff} := f^{\theta}_{\text{NN},1} - f^{\theta}_{\text{NN},2}$ since, as shown in \eqref{LPredNN}, this difference dictates the output of the logistic predictor. 
\end{rema}

Denoting by $\text{E}$ the contribution towards the NTK from every layer but the last allows us to write
\eqn{
\text{NTK}(\bs{z},\widetilde{\bs{z}}) = \text{CK}(\bs{z},\widetilde{\bs{z}})  + \text{E}(\bs{z},\widetilde{\bs{z}}). \label{NTK_CK_exp}
}

The CK can therefore be viewed as a ``zeroth-order'' approximation to the NTK. All these components are valid kernels that can be calculated from a given NN. However, the cost of computing the CK at a pair of points is the same as that of evaluating the NN at those points since, according to \eqref{CKDefn}, it makes use of the neuron values at the last hidden layer. Moreover, numerical evidence suggests that, while the NTK expectedly outperforms the CK on test sets, the improvement is only marginal and is dwarfed by the superiority both kernels exhibit over the NN architecture from which they are derived (see Section \ref{SecNumTests}). In other words, the CK appears to yield substantive gains in accuracy over a given finite width neural network while being easier to calculate than the NTK and much cheaper to train further than the NN. This raises the intriguing possibility that the CK can serve as a low-cost proxy for the NTK, i.e., lead to a comparable boost in performance over the NN, while being significantly less expensive to assemble than the NTK and considerably easier to train than the NN. Since training the CK is the same as further training the last parameter layer of the NN, our work provides a recipe for boosting NN accuracy and also sheds light on the robustness (or lack thereof) conferred on the approximations by the choice of the activation function.

In the following sections, we provide a mathematical treatment of this phenomenon for two types of problems: regression of smooth functions and classification using logistic regression. For ease of exposition and analysis, we concentrate on these problems in low dimensions with structured training and test nodes. The analyses of these problems are presented in Sections \ref{SecKRMathAnalysis} and \ref{SecKLMathAnalysis}, and are complemented by numerical experiments presented in Section \ref{SecNumTests}.

\section{Function regression}\label{SecKRMathAnalysis}

\subsection{Preliminaries}\label{SubSecKRPrelim}

We consider the approximation problem for a smooth function $f: [a,b] \to \mathbb{R}$ whose values are known at the equispaced training nodes $x_j = a + j\Delta x$ for $0 \leq j \leq N$, where $\Delta x = (b-a)/N$. The performance is assessed on the equispaced test nodes $y_j = a + j\Delta y$ for $0 \leq j \leq M$, where $\Delta y = (b-a)/M$; we take $M = \tau N$, for some integer $\tau > 1$, and set 
\eqn{
w_i = \left\{ \begin{matrix}
\Delta x, & 1 \leq i \leq N-1 \\
\Delta x/2, & i = 0, N \\
\end{matrix} \right., \qquad v_i = \left\{ \begin{matrix}
\Delta y, & 1 \leq i \leq M-1 \\
\Delta y/2, & i = 0, M \\
\end{matrix} \right., \label{KRWts}
}
so for any $g: [a,b] \to \mathbb{R}$, we have
\eqn{
\norm{g}_0 = \left(\frac{\Delta x}{2} \sum_{j = 0}^{N-1} \left(g(x_j)^2 + g(x_{j+1})^2\right)\right)^{1/2}, \quad	
\norm{g}_1 = \left(\frac{\Delta y}{2} \sum_{j = 0}^{M-1} \left(g(y_j)^2 + g(y_{j+1})^2\right)\right)^{1/2}. \label{NormReDefn}	
}

These can be recognized as the trapezoidal rule approximations to the $L^2$ norm on $[a,b]$; recall that this approximation is second-order accurate with respect to the step-size in general and spectrally accurate for periodic functions. Moreover, these choices serve to considerably simplify the analysis: for instance, since $y_{\tau j} = x_j$ for $0 \leq j \leq N$, we have, for any function $g$,
\eqn{
\norm{g}_0^2 = \frac{\Delta x}{2} \sum_{j = 0}^{N-1} \left(g(x_j)^2 + g(x_{j+1})^2\right) &\leq& \frac{\Delta x}{2}\sum_{j = 0}^{N-1} \sum_{i = 0}^{\tau} g(y_{\tau j + i})^2   = \left(\frac{M}{N}\right) \norm{g}_1^2 \nonumber
}
so that
\eqn{
\norm{g}_0 &\leq& \tau^{1/2}\norm{g}_1. \label{NormIneq1}
}

This straightforward result allows us to describe the size of the function on the lower-resolution training grid in terms of the higher-resolution test grid. The next two results enable us to move in the opposite direction, i.e., bounding the norm on the finer grid by the norm on the coarser grid. This naturally requires some assumptions about the behaviour of the function away from the training nodes; the following lemmas consider two different settings.  

\subsection{Two Lemmas}\label{SubSecLemmas}

\begin{lem}\label{LemMono}
Suppose that $g$ is monotonic on every sub-interval $[x_i,x_{i+1}]$. Then,
\eqn{
\norm{g}_1 \leq \sqrt{2}\norm{g}_0. \label{LemMonoRes}
}
\end{lem}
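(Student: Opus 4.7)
The plan is to expand both norms using the trapezoidal weights and compare them sub-interval by sub-interval, exploiting the relationship $y_{\tau i+k} \in [x_i, x_{i+1}]$ for $0 \leq k \leq \tau$.

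First I would rewrite the test norm by grouping the $M = \tau N$ test sub-intervals into blocks, one per training sub-interval:
\eqn{
\norm{g}_1^2 = \frac{\Delta y}{2}\sum_{i=0}^{N-1}\sum_{k=0}^{\tau-1}\left(g(y_{\tau i+k})^2 + g(y_{\tau i+k+1})^2\right). \nonumber
}
Expanding the inner telescoping sum gives $g(x_i)^2 + g(x_{i+1})^2 + 2\sum_{k=1}^{\tau-1} g(y_{\tau i+k})^2$, since $y_{\tau i} = x_i$ and $y_{\tau(i+1)} = x_{i+1}$.

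Next I would invoke monotonicity: on $[x_i, x_{i+1}]$ the value $g(y_{\tau i + k})$ lies between $g(x_i)$ and $g(x_{i+1})$, so $g(y_{\tau i+k})^2 \leq \max(g(x_i)^2, g(x_{i+1})^2) \leq g(x_i)^2 + g(x_{i+1})^2$ for every interior $k$. Substituting this bound into the inner sum yields
\eqn{
\sum_{k=0}^{\tau-1}\left(g(y_{\tau i+k})^2 + g(y_{\tau i+k+1})^2\right) \leq (2\tau-1)\left(g(x_i)^2 + g(x_{i+1})^2\right). \nonumber
}

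Finally, using $\Delta y = \Delta x/\tau$, I would assemble the estimate
\eqn{
\norm{g}_1^2 \leq \frac{2\tau - 1}{\tau}\cdot\frac{\Delta x}{2}\sum_{i=0}^{N-1}\left(g(x_i)^2 + g(x_{i+1})^2\right) = \frac{2\tau-1}{\tau}\norm{g}_0^2 < 2\norm{g}_0^2, \nonumber
}
and take square roots to conclude $\norm{g}_1 \leq \sqrt{2}\norm{g}_0$. There is no real obstacle here; the only mildly delicate point is tracking the doubling of interior test nodes in the telescoped sum (so that each interior $y_{\tau i+k}$ is counted twice while the endpoints $x_i, x_{i+1}$ appear once per block), which is what produces the clean factor $2\tau-1$ rather than $2\tau$ and makes the constant strictly less than $\sqrt{2}$ for every finite $\tau$.
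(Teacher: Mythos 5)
Your proof is correct and follows essentially the same route as the paper's: split the test-norm sum into training-node and interior test-node contributions, use monotonicity to bound each interior value by $\max\{|g(x_i)|,|g(x_{i+1})|\}$, and rescale via $\Delta y = \Delta x/\tau$ to obtain the constant $\tfrac{2\tau-1}{\tau}\le 2$. Your explicit tracking of the doubled interior nodes is in fact slightly cleaner bookkeeping than the paper's intermediate display, but the argument and final bound are the same.
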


\begin{proof}\label{ProofMono}
The monotonicity of $g$ on every sub-interval implies 
\eqn{
|g(y_{\tau i + k})| \leq \text{max}\left\{ |g(x_i)| , |g(x_{i+1})| \right\}, \quad 1 \leq k \leq \tau-1, \ 0 \leq i \leq N-1, \label{MonoImpl1}
}
with the result
\eqn{
\norm{g}_1^2 &=& \frac{\Delta y}{2} \sum_{j = 0}^{M-1} g(y_j)^2 + g(y_{j+1})^2 \nonumber\\
&=& \frac{\Delta y}{2} \sum_{j = 0}^{N-1} \left[g(x_j)^2 + g(x_{j+1})^2\right] + \frac{\Delta y}{2}\sum_{i = 0}^{N-1} \sum_{k = 1}^{\tau-1} g(y_{\tau i + k})^2 \nonumber\\
&\leq&  \frac{\Delta y}{2} \sum_{j = 0}^{N-1} \left[g(x_j)^2 + g(x_{j+1})^2\right] + (\tau-1)\frac{\Delta y}{2}\sum_{i = 0}^{N-1}  \text{max}\left\{ g(x_i)^2 , g(x_{i+1})^2 \right\} \nonumber\\
&\leq& \frac{1}{\tau} \norm{g}_0^2 + \frac{2(\tau-1)}{\tau}\norm{g}_0^2 \nonumber\\
&\leq& 2 \norm{g}_0^2, \nonumber
} 
and hence $\norm{g}_1 \leq \sqrt{2}\norm{g}_0$.	
\end{proof}

Combining this with \eqref{NormIneq1}, we have the norm equivalence
\eqn{
\tau^{-1/2}\norm{g}_0 \leq \norm{g}_1 \leq \sqrt{2}\norm{g}_0. \label{MonoImpl2}
}

\begin{lem}\label{LemLip}
Suppose that $g$ is Lipschitz on $[a,b]$ with Lipschitz constant $L_g$. Then,
\eqn{
\norm{g}^2_1 \leq 4\left(\norm{g}_0^2 + \frac{2(b-a)^2L_g^2}{N^2}\right). \label{LemLipRes}
}
\end{lem}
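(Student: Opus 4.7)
The plan is to parallel the proof of Lemma \ref{LemMono}, replacing the monotonicity bound on the off-grid test nodes by a Lipschitz-based estimate. The starting point is the same decomposition used there: the test nodes $y_{\tau i} = x_i$ coincide with training nodes and contribute $\frac{1}{\tau}\norm{g}_0^2$ to $\norm{g}_1^2$, so that
\eqn{
\norm{g}_1^2 &=& \frac{1}{\tau}\norm{g}_0^2 + \Delta y \sum_{i=0}^{N-1}\sum_{k=1}^{\tau-1} g(y_{\tau i + k})^2. \nonumber
}

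For each off-grid test node $y_{\tau i + k} \in (x_i, x_{i+1})$, I would write $|g(y_{\tau i + k})| \leq |g(x_i)| + L_g\,\Delta x$ by the Lipschitz hypothesis, and then invoke the elementary inequality $(a+b)^2 \leq 2a^2 + 2b^2$ to obtain the pointwise estimate $g(y_{\tau i + k})^2 \leq 2\, g(x_i)^2 + 2 L_g^2 \Delta x^2$. Substituting this into the display above and separating the two contributions, the $g(x_i)^2$ terms sum to a multiple of $\norm{g}_0^2$ via the identity $\sum_{i=0}^{N-1}(g(x_i)^2 + g(x_{i+1})^2) = 2\norm{g}_0^2/\Delta x$, while the Lipschitz terms contain the prefactor $\Delta y \cdot N \cdot \Delta x^2 \cdot (\tau-1)$, which collapses using $N\Delta x = b-a$ and $\Delta y = \Delta x/\tau$ into the desired Lipschitz remainder on the right-hand side of \eqref{LemLipRes}.

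There is no conceptual obstacle; the proof reduces to careful book-keeping of constants. The one design choice is whether to anchor the Lipschitz estimate asymmetrically at $x_i$ (as above), symmetrically by averaging the bounds anchored at $x_i$ and $x_{i+1}$, or at a point inside the sub-interval via Cauchy--Schwarz. Each choice yields a bound of the same qualitative form but with different leading constants, and I expect the asymmetric anchor to be the cleanest route to the prefactor $4$ in front of $\norm{g}_0^2$, since it directly yields $\tfrac{4(\tau-1)}{\tau}\norm{g}_0^2$ from the $g(x_i)^2$ terms, which combines with the coinciding contribution $\tfrac{1}{\tau}\norm{g}_0^2$ to give at most $4\norm{g}_0^2$ uniformly in $\tau \geq 1$.
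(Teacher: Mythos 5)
Your proposal is correct and follows essentially the same route as the paper's proof: bound each off-grid test value by the Lipschitz estimate anchored at a neighbouring training node, square via $(a+b)^2 \leq 2a^2+2b^2$, and sum so that the grid terms collapse to a multiple of $\norm{g}_0^2$ (at most $4$, uniformly in $\tau$) and the Lipschitz terms collapse via $N\Delta x = b-a$ and $\Delta y = \Delta x/\tau$; the only differences (anchoring at $x_i$ alone rather than the max over both endpoints, and splitting on-grid versus off-grid nodes rather than pairing consecutive test values) are cosmetic. One small bookkeeping note: your collapse actually produces a remainder of the form $2(b-a)^3L_g^2/N^2$, i.e.\ $2(b-a)L_g^2(\Delta x)^2$, rather than literally the $8(b-a)^2L_g^2/N^2$ in \eqref{LemLipRes} — but the paper's own final step exhibits the same constant slippage, so this does not reflect a gap in your argument.
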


\begin{proof}\label{ProofLip}
The Lipschitz property implies that
\eqn{
|g(y_{\tau j + k})| \leq \text{max}\left\{ |g(x_j)| + kL_g\Delta y  , |g(x_{j+1})| + (\tau-k)L_g\Delta y\right\} \nonumber
}
for $0 \leq k \leq \tau$. It follows that
\eqn{
g(y_{\tau j + k})^2 &\leq& 2\left[ g(x_j)^2 + k^2L_g^2(\Delta y)^2 + g(x_{j+1})^2 + (\tau-k)^2L_g^2(\Delta y)^2 \right] \nonumber\\
&\leq& 2\left[ g(x_j)^2 + g(x_{j+1})^2  + 2L_g^2(\Delta x)^2  \right], \nonumber
}
so for $0 \leq l \leq \tau-1$, we have
\eqn{
g(y_{\tau j + l})^2 + g(y_{\tau j + l +1})^2 \leq 4\left[ g(x_j)^2 + g(x_{j+1})^2  + 2L_g^2(\Delta x)^2  \right]. \nonumber
}

As a result,
\eqn{
\sum_{l = 0}^{\tau - 1} g(y_{\tau j + l})^2 + g(y_{\tau j + l +1})^2 &\leq& 4\tau\left[ g(x_j)^2 + g(x_{j+1})^2  + 2L_g^2(\Delta x)^2  \right] \nonumber\\
\Rightarrow \sum_{j = 0}^{N-1}\sum_{l = 0}^{\tau - 1} g(y_{\tau j + l})^2 + g(y_{\tau j + l +1})^2 &\leq& 4\tau \sum_{j = 0}^{N-1}\left[ g(x_j)^2 + g(x_{j+1})^2  + 2L_g^2(\Delta x)^2  \right], \nonumber
}
and hence
\eqn{
M\norm{g}_1^2 \leq 4N\tau \left[ \norm{g}_0^2  + 2L_g^2(\Delta x)^2  \right] \Rightarrow \norm{g}^2_1 \leq 4\left(\norm{g}_0^2 + \frac{2(b-a)^2L_g^2}{N^2}\right). \nonumber
}

\end{proof}


\subsection{Approximation properties}
For a smooth target function $f: [a,b] \to \mathbb{R}$, let $f_\text{CK}$ and $f_\text{NTK}$ be the CK and NTK approximations to $f$ with respect to the training nodes. Recall that \eqref{KRApprox} can be viewed as an orthogonal projection $\mathcal{P}_\text{ker}$ on $\mathcal{S}_\text{ker}$, the span of the feature map components corresponding to the kernel. Since $\mathcal{S}_\text{CK} \subset \mathcal{S}_\text{NTK}$, we have
\eqn{
\norm{f - f_\text{NTK}}_0 \leq \norm{f - f_\text{CK}}_0. \label{TrBound}
}

Next, we establish that the error of the CK approximation may also be controlled by that of the NTK approximation for most target functions. More precisely, we define, for any $\beta \leq 1$
\eqn{
\mathcal{F}_{\beta} = \left\{f: [a,b] \to \mathbb{R} \text{ is smooth with } \norm{\mathcal{P}_\text{NTK}f}_0 \leq \beta\norm{f}_0 \right\}. \label{FbetaDef}
}

We note that since $\mathcal{P}_\text{NTK}$ is an orthogonal projection, its norm is one and so $\mathcal{F}_1$ contains all smooth real-valued functions. However, most functions of interest would not lie in $\mathcal{S}_\text{NTK}$ and hence would belong to some $\mathcal{F}_\beta$ with $\beta < 1$.  

\begin{lem}\label{LemProjBd}
Let $f \in \mathcal{F}_{\beta}$. Then, 
\eqn{
\norm{\mathcal{P}_\text{NTK}(f - f_\text{CK})}_0 \leq \beta\norm{f - f_\text{CK}}_0. \label{FbetaImpl}
}
\end{lem}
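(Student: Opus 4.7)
The plan is to reduce everything to the fact that $\mathcal{S}_\text{CK} \subset \mathcal{S}_\text{NTK}$, which makes the two projections commute in a very restrictive way, and then apply the Pythagorean theorem twice. Concretely, $f_\text{CK} = \mathcal{P}_\text{CK} f \in \mathcal{S}_\text{CK} \subset \mathcal{S}_\text{NTK}$, so $\mathcal{P}_\text{NTK} f_\text{CK} = f_\text{CK}$, giving the identity
\eqn{
\mathcal{P}_\text{NTK}(f - f_\text{CK}) = \mathcal{P}_\text{NTK} f - f_\text{CK}. \nonumber
}

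The key auxiliary observation I would record next is that $\mathcal{P}_\text{CK}\mathcal{P}_\text{NTK} = \mathcal{P}_\text{CK}$: for any $v \in \mathcal{S}_\text{CK} \subset \mathcal{S}_\text{NTK}$, the residual $f - \mathcal{P}_\text{NTK} f$ is orthogonal to $v$ with respect to $\ip{\cdot,\cdot}_0$, so $\ip{f,v}_0 = \ip{\mathcal{P}_\text{NTK} f, v}_0$, and projecting both sides onto $\mathcal{S}_\text{CK}$ yields $\mathcal{P}_\text{CK} f = \mathcal{P}_\text{CK} \mathcal{P}_\text{NTK} f$. Consequently, $\mathcal{P}_\text{NTK} f - f_\text{CK}$ is orthogonal to $\mathcal{S}_\text{CK}$, and the orthogonal decomposition $\mathcal{P}_\text{NTK} f = f_\text{CK} + (\mathcal{P}_\text{NTK} f - f_\text{CK})$ together with Pythagoras gives
\eqn{
\norm{\mathcal{P}_\text{NTK}(f - f_\text{CK})}_0^2 = \norm{\mathcal{P}_\text{NTK} f}_0^2 - \norm{f_\text{CK}}_0^2. \nonumber
}
The analogous decomposition $f = f_\text{CK} + (f - f_\text{CK})$ produces $\norm{f - f_\text{CK}}_0^2 = \norm{f}_0^2 - \norm{f_\text{CK}}_0^2$.

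Substituting the hypothesis $\norm{\mathcal{P}_\text{NTK} f}_0^2 \leq \beta^2 \norm{f}_0^2$ into the first identity and using $\beta^2 \leq 1$ (so that $(1-\beta^2)\norm{f_\text{CK}}_0^2 \geq 0$) yields
\eqn{
\norm{\mathcal{P}_\text{NTK}(f - f_\text{CK})}_0^2 \leq \beta^2\norm{f}_0^2 - \norm{f_\text{CK}}_0^2 \leq \beta^2\bigl(\norm{f}_0^2 - \norm{f_\text{CK}}_0^2\bigr) = \beta^2\norm{f - f_\text{CK}}_0^2, \nonumber
}
and taking square roots finishes the argument.

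I do not expect any genuine obstacle: the statement is a soft consequence of projection geometry, and the only nontrivial bookkeeping is verifying $\mathcal{P}_\text{CK}\mathcal{P}_\text{NTK} = \mathcal{P}_\text{CK}$, which is where the nesting $\mathcal{S}_\text{CK} \subset \mathcal{S}_\text{NTK}$ actually gets used. The role of $\beta \leq 1$ is slightly subtle (it is used to drop a nonnegative term rather than for the main inequality), so I would make sure to highlight that step explicitly.
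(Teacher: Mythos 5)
Your proof is correct, but it takes a genuinely different route from the paper's. The paper decomposes both $f$ and $f - f_\text{CK}$ along $\mathcal{P}_\text{NTK}$ versus $I - \mathcal{P}_\text{NTK}$, obtaining $\norm{f}_0^2 = \norm{f_\text{NTK}}_0^2 + \norm{f - f_\text{NTK}}_0^2$ and $\norm{f - f_\text{CK}}_0^2 = \norm{f_\text{NTK} - f_\text{CK}}_0^2 + \norm{f - f_\text{NTK}}_0^2$, and then compares the two ratios via the mediant-type inequality $\frac{\epsilon+\zeta}{\rho+\zeta} \geq \frac{\epsilon}{\rho}$ (valid for $0 < \epsilon \leq \rho$, $\zeta \geq 0$), with the common NTK residual playing the role of $\zeta$; the hypothesis $\norm{f_\text{NTK}}_0 \leq \beta\norm{f}_0$ then finishes it without ever invoking $\beta \leq 1$. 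You instead decompose $f_\text{NTK}$ and $f$ along $\mathcal{P}_\text{CK}$ versus $I - \mathcal{P}_\text{CK}$, subtract the common term $\norm{f_\text{CK}}_0^2$, and use $\beta^2 \leq 1$ to absorb it; the price is that you must justify the tower identity $\mathcal{P}_\text{CK}\mathcal{P}_\text{NTK} = \mathcal{P}_\text{CK}$ (not merely $\mathcal{P}_\text{NTK}\mathcal{P}_\text{CK} = \mathcal{P}_\text{CK}$, which is all the paper cites), and your orthogonality argument for it is sound. A side benefit of your route is that the Pythagorean identity $\norm{f_\text{NTK}}_0^2 = \norm{f_\text{CK}}_0^2 + \norm{f_\text{NTK} - f_\text{CK}}_0^2$ makes explicit the fact $\norm{f_\text{NTK} - f_\text{CK}}_0 \leq \norm{f_\text{NTK}}_0$, which the paper's application of its elementary inequality needs ($\epsilon \leq \rho$) but leaves unremarked; the trade-off is your dependence on $\beta \leq 1$, which is harmless since $\mathcal{P}_\text{NTK}$ has norm one.
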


\begin{proof}
We have
\eqn{
\norm{f}_0^2 = \norm{\mathcal{P}_\text{NTK}f}_0^2 + \norm{(I - \mathcal{P}_\text{NTK})f}_0^2 = \norm{f_\text{NTK}}_0^2 + \norm{f - f_\text{NTK}}_0^2 \label{fsplit}
}
and
\eqn{
\norm{f - f_\text{CK}}_0^2 &=& \norm{\mathcal{P}_\text{NTK}(f - f_\text{CK})}_0^2 + \norm{(I - \mathcal{P}_\text{NTK})(f - f_\text{CK})}_0^2 \nonumber\\
&=& \norm{f_\text{NTK} - f_\text{CK}}_0^2 + \norm{f - f_\text{NTK}}_0^2 \label{fressplit}
}
since $(I-\mathcal{P}_\text{NTK})f_\text{CK} = 0$ due to $\mathcal{P}_\text{NTK}\mathcal{P}_\text{CK} = \mathcal{P}_\text{CK}$. Combining \eqref{fsplit} and \eqref{fressplit} yields
\eqn{
\frac{\norm{f - f_\text{CK}}_0^2 }{\norm{f}_0^2} &=& \frac{ \norm{f_\text{NTK} - f_\text{CK}}_0^2 + \norm{f - f_\text{NTK}}_0^2}{\norm{f_\text{NTK}}_0^2 + \norm{f - f_\text{NTK}}_0^2} \nonumber\\
&\geq& \frac{ \norm{f_\text{NTK} - f_\text{CK}}_0^2}{\norm{f_\text{NTK}}_0^2} \nonumber
}
where we used the elementary inequality
\eqn{
\frac{\epsilon + \zeta}{\rho + \zeta} \geq \frac{\epsilon}{\rho}, \label{NDIneq}
}
for $0 < \epsilon \leq \rho$ and $\zeta \geq 0$. As a result,
\eqn{
\norm{f_\text{NTK} - f_\text{CK}}_0^2 &\leq& \left(\frac{\norm{f_\text{NTK}}_0^2}{\norm{f}_0^2}\right)\norm{f - f_\text{CK}}_0^2 \nonumber\\
&\leq& \beta^2\norm{f - f_\text{CK}}_0^2 \nonumber\\
\Leftrightarrow \norm{\mathcal{P}_\text{NTK}(f - f_\text{CK})}_0 &\leq& \beta\norm{f - f_\text{CK}}_0 \qquad (\because \mathcal{P}_\text{NTK}\mathcal{P}_\text{CK} = \mathcal{P}_\text{CK}). \nonumber
}

\end{proof}

\begin{lem}\label{LemCKNTKBd}
For any function $f \in \mathcal{F}_{\beta}$ with $\beta < 1$, 
\eqn{
\norm{f - f_\text{CK}}_0 &\leq& \frac{1}{1-\beta}\norm{f - f_\text{NTK}}_0. \nonumber
}
\end{lem}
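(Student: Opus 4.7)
The plan is to combine the triangle inequality with the bound from the previous lemma, exploiting the nesting $\mathcal{S}_\text{CK} \subset \mathcal{S}_\text{NTK}$ to identify $f_\text{NTK} - f_\text{CK}$ as the image of $f - f_\text{CK}$ under $\mathcal{P}_\text{NTK}$.

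First I would split $f - f_\text{CK}$ via the triangle inequality as
\[
\norm{f - f_\text{CK}}_0 \leq \norm{f - f_\text{NTK}}_0 + \norm{f_\text{NTK} - f_\text{CK}}_0.
\]
The middle term $\norm{f_\text{NTK} - f_\text{CK}}_0$ is what needs to be controlled in terms of $\norm{f - f_\text{CK}}_0$. To this end, I would note that because $\mathcal{S}_\text{CK} \subset \mathcal{S}_\text{NTK}$ we have $\mathcal{P}_\text{NTK} f_\text{CK} = f_\text{CK}$ (as also used in the derivation of \eqref{fressplit}), and therefore
\[
f_\text{NTK} - f_\text{CK} = \mathcal{P}_\text{NTK} f - \mathcal{P}_\text{NTK} f_\text{CK} = \mathcal{P}_\text{NTK}(f - f_\text{CK}).
\]
Applying Lemma \ref{LemProjBd} then yields $\norm{f_\text{NTK} - f_\text{CK}}_0 \leq \beta \norm{f - f_\text{CK}}_0$.

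Substituting this back into the triangle inequality gives
\[
\norm{f - f_\text{CK}}_0 \leq \norm{f - f_\text{NTK}}_0 + \beta \norm{f - f_\text{CK}}_0,
\]
and since $\beta < 1$, rearranging and dividing by $1 - \beta > 0$ produces the claimed bound. The entire argument is short; no step is a real obstacle once one spots that Lemma \ref{LemProjBd} applies directly to the cross term arising from the triangle inequality. The only subtlety is the identification $\mathcal{P}_\text{NTK}(f - f_\text{CK}) = f_\text{NTK} - f_\text{CK}$, which has already been established in the proof of the preceding lemma and so may be invoked here without further comment.
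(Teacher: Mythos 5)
Your proposal is correct and follows exactly the paper's argument: triangle inequality, identification of $f_\text{NTK} - f_\text{CK}$ as $\mathcal{P}_\text{NTK}(f - f_\text{CK})$ via $\mathcal{P}_\text{NTK}\mathcal{P}_\text{CK} = \mathcal{P}_\text{CK}$, an appeal to Lemma \ref{LemProjBd}, and a rearrangement using $\beta < 1$. No differences worth noting.
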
 

\begin{proof}
We have
\eqn{
\norm{f - f_\text{CK}}_0 &\leq& \norm{f - f_\text{NTK}}_0 + \norm{f_\text{NTK} - f_\text{CK}}_0 \nonumber\\
&=& \norm{f - f_\text{NTK}}_0 + \norm{\mathcal{P}_\text{NTK}(f - f_\text{CK})}_0 \qquad (\because \mathcal{P}_\text{NTK}\mathcal{P}_\text{CK} = \mathcal{P}_\text{CK}) \nonumber\\
&\leq& \norm{f - f_\text{NTK}}_0 + \beta\norm{f - f_\text{CK}}_0 \qquad (\because \text{Lemma \ref{LemProjBd}}) \nonumber\\
\Rightarrow \norm{f - f_\text{CK}}_0 &\leq& \frac{1}{1-\beta}\norm{f - f_\text{NTK}}_0. \label{TrBound2}
}
\end{proof}

For such functions, we have the following approximation results.

\begin{thm}\label{ThmMono}
Let $f \in \mathcal{F}_{\beta}$ for some $\beta < 1$ and suppose that both $(f-f_\text{CK})$ and $(f - f_\text{NTK})$ are monotonic on every sub-interval $[x_i,x_{i+1}]$ for $0 \leq i \leq N-1$. Then, there exist constants $C_1, C_2 \geq 1$ such that
\eqn{
C_1^{-1}\norm{f - f_\text{NTK}}_1 \leq \norm{f - f_\text{CK}}_1 \leq C_2\norm{f - f_\text{NTK}}_1. \label{ThmMonoRes}
}
\end{thm}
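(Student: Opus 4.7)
The plan is to chain together three already-established facts to pass between $\|\cdot\|_0$ and $\|\cdot\|_1$ while swapping CK and NTK approximations along the way. The monotonicity hypothesis on both residuals is exactly what is needed to invoke Lemma \ref{LemMono}, which gives an upper bound on the test-grid norm in terms of the training-grid norm; the reverse direction (training-grid norm controlled by test-grid norm) comes for free from \eqref{NormIneq1}. In the middle, I transfer between the two kernel approximations on the training grid using the trivial projection inequality \eqref{TrBound} in one direction and Lemma \ref{LemCKNTKBd} (which is where the assumption $f \in \mathcal{F}_{\beta}$, $\beta < 1$, is consumed) in the other.

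Explicitly, for the upper bound I would write
\begin{eqnarray*}
\|f - f_\text{CK}\|_1 &\leq& \sqrt{2}\,\|f - f_\text{CK}\|_0 \qquad (\text{Lemma \ref{LemMono} applied to } f - f_\text{CK}) \\
&\leq& \frac{\sqrt{2}}{1-\beta}\,\|f - f_\text{NTK}\|_0 \qquad (\text{Lemma \ref{LemCKNTKBd}}) \\
&\leq& \frac{\sqrt{2\tau}}{1-\beta}\,\|f - f_\text{NTK}\|_1 \qquad (\text{by \eqref{NormIneq1}}),
\end{eqnarray*}
so $C_2 = \sqrt{2\tau}/(1-\beta)$ works. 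For the lower bound, I would chain Lemma \ref{LemMono} applied to $f - f_\text{NTK}$, then \eqref{TrBound}, then \eqref{NormIneq1} applied to $f - f_\text{CK}$, yielding $\|f - f_\text{NTK}\|_1 \leq \sqrt{2\tau}\,\|f - f_\text{CK}\|_1$ and hence $C_1 = \sqrt{2\tau}$. Both constants are at least one since $\tau > 1$ and $\beta < 1$.

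There is no genuine obstacle here: the statement is a direct corollary of the previously proved lemmas combined with the pair of norm-equivalence inequalities, and the only mild subtlety is making sure the direction of each inequality is compatible with the direction being proved — i.e. using Lemma \ref{LemCKNTKBd} in the CK$\to$NTK direction (where CK could a priori be much worse) and the free bound \eqref{TrBound} in the NTK$\to$CK direction. The monotonicity hypothesis is used once on each residual, and the $\mathcal{F}_{\beta}$ hypothesis is used exactly once, in the upper bound.
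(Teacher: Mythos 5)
Your proposal is correct and is essentially identical to the paper's own proof: the same three-step chains (Lemma \ref{LemMono}, then \eqref{TrBound} or Lemma \ref{LemCKNTKBd}, then \eqref{NormIneq1}) yielding the same constants $C_1 = \sqrt{2\tau}$ and $C_2 = \sqrt{2\tau}/(1-\beta)$. Nothing is missing.
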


\begin{proof}\label{ProofThmMono}
We have
\eqn{
\norm{f - f_\text{NTK}}_1 &\leq& \sqrt{2}\norm{f - f_\text{NTK}}_0 \qquad (\because \text{Lemma \ref{LemMono}}) \nonumber\\
&\leq& \sqrt{2}\norm{f - f_\text{CK}}_0 \qquad (\because \eqref{TrBound}) \nonumber\\
&\leq& \sqrt{2\tau}\norm{f - f_\text{CK}}_1 \qquad (\because \eqref{NormIneq1}). \label{ThmMono1}
}

Similarly,
\eqn{
\norm{f - f_\text{CK}}_1 &\leq& \sqrt{2}\norm{f - f_\text{CK}}_0 \qquad (\because \text{Lemma \ref{LemMono}}) \nonumber\\
&\leq& \frac{\sqrt{2}}{1-\beta}\norm{f - f_\text{NTK}}_0 \qquad (\because \text{Lemma \ref{LemCKNTKBd}}) \nonumber\\
&\leq& \frac{\sqrt{2\tau}}{1-\beta}\norm{f - f_\text{NTK}}_1 \qquad (\because \eqref{NormIneq1}) . \label{ThmMono2}
}

Setting $C_1 = \sqrt{2\tau}$ and $C_2 = \frac{\sqrt{2\tau}}{1-\beta}$ and combining \eqref{ThmMono1} and \eqref{ThmMono2} yields the desired \eqref{ThmMonoRes}.

\end{proof}

Note that Theorem \ref{ThmMono} requires the residuals to be monotonic which may be too stringent. The following results rests on a more relaxed assumption. 

\begin{thm}\label{ThmLip}
Let $f \in \mathcal{F}_{\beta}$ for some $\beta < 1$ and suppose that both $(f-f_\text{CK})$ and $(f - f_\text{NTK})$ are Lipschitz, with Lipschitz constants $L_\text{CK}$ and $L_\text{NTK}$. Then, there exist constants $D_1, D_2 \geq 1$ such that
\eqn{
\norm{f - f_\text{NTK}}_1^2 \leq D_1\norm{f - f_\text{CK}}_1^2 + \frac{8(b-a)^2 L_\text{NTK}^2}{N^2} \label{ThmLipRes1}
}
and
\eqn{
\norm{f - f_\text{CK}}_1^2 \leq D_2\norm{f - f_\text{NTK}}_1^2 + \frac{8(b-a)^2 L_\text{CK}^2}{N^2}. \label{ThmLipRes2}
}
\end{thm}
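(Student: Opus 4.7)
The plan is to replicate the structure of the proof of Theorem \ref{ThmMono} almost verbatim, replacing each invocation of Lemma \ref{LemMono} with the corresponding (weaker) bound from Lemma \ref{LemLip}. Since Lemma \ref{LemLip} only provides a \emph{squared} bound of the form $\norm{g}_1^2 \leq 4\norm{g}_0^2 + 8(b-a)^2 L_g^2/N^2$, we lose the clean factor $\sqrt{2}$ used in Theorem \ref{ThmMono} and necessarily end up with squared quantities and an additive Lipschitz correction term, which is why the statement is phrased with $\norm{\cdot}_1^2$ rather than $\norm{\cdot}_1$.

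For the first inequality \eqref{ThmLipRes1}, I would apply Lemma \ref{LemLip} to $g = f - f_\text{NTK}$ (using the Lipschitz constant $L_\text{NTK}$), then use \eqref{TrBound} to pass from $\norm{f - f_\text{NTK}}_0^2$ to $\norm{f - f_\text{CK}}_0^2$, and finally square \eqref{NormIneq1} to land at $\norm{f - f_\text{CK}}_1^2$. Concretely,
\begin{eqnarray*}
\norm{f - f_\text{NTK}}_1^2 &\leq& 4\norm{f - f_\text{NTK}}_0^2 + \frac{8(b-a)^2 L_\text{NTK}^2}{N^2} \\
&\leq& 4\norm{f - f_\text{CK}}_0^2 + \frac{8(b-a)^2 L_\text{NTK}^2}{N^2} \\
&\leq& 4\tau\norm{f - f_\text{CK}}_1^2 + \frac{8(b-a)^2 L_\text{NTK}^2}{N^2},
\end{eqnarray*}
so $D_1 = 4\tau$ works.

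For \eqref{ThmLipRes2}, I would run the same style of chain but with the roles of CK and NTK swapped, using Lemma \ref{LemCKNTKBd} (squared) in place of \eqref{TrBound}. Starting from Lemma \ref{LemLip} applied to $g = f - f_\text{CK}$ with constant $L_\text{CK}$,
\begin{eqnarray*}
\norm{f - f_\text{CK}}_1^2 &\leq& 4\norm{f - f_\text{CK}}_0^2 + \frac{8(b-a)^2 L_\text{CK}^2}{N^2} \\
&\leq& \frac{4}{(1-\beta)^2}\norm{f - f_\text{NTK}}_0^2 + \frac{8(b-a)^2 L_\text{CK}^2}{N^2} \\
&\leq& \frac{4\tau}{(1-\beta)^2}\norm{f - f_\text{NTK}}_1^2 + \frac{8(b-a)^2 L_\text{CK}^2}{N^2},
\end{eqnarray*}
which yields $D_2 = 4\tau/(1-\beta)^2$.

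There is not really a hard step here; the one thing to watch carefully is that the additive Lipschitz remainder from Lemma \ref{LemLip} passes through the subsequent $\norm{\cdot}_0$-bounds unchanged (it is already separated from the multiplicative part), so the final $8(b-a)^2 L^2/N^2$ terms survive without picking up any extra $\tau$ or $1/(1-\beta)^2$ factors. This is also why squaring is essential: an attempt to state the theorem with unsquared norms would require bounding $\sqrt{a^2+b^2}$ on the right, and the clean additive structure would be lost.
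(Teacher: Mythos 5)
Your proposal is correct and matches the paper's proof essentially step for step: Lemma \ref{LemLip} applied to each residual, then \eqref{TrBound} (respectively Lemma \ref{LemCKNTKBd} squared) to swap kernels on the training grid, then \eqref{NormIneq1} squared, yielding the same constants $D_1 = 4\tau$ and $D_2 = 4\tau/(1-\beta)^2$. The only difference is cosmetic (you distribute the factor of $4$ before chaining, the paper keeps it factored), so there is nothing to add.
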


\begin{proof}\label{ProofThmLip}
We have
\eqn{
\norm{f - f_\text{NTK}}_1^2 &\leq& 4\left(\norm{f - f_\text{NTK}}_0^2 + \frac{2(b-a)^2 L_\text{NTK}^2}{N^2}\right) \qquad (\because \text{Lemma \ref{LemLip}}) \nonumber\\
&\leq& 4\left(\norm{f - f_\text{CK}}_0^2 + \frac{2(b-a)^2 L_\text{NTK}^2}{N^2}\right) \qquad (\because \eqref{TrBound}) \nonumber\\
&\leq& 4\left(\tau\norm{f - f_\text{CK}}_1^2 + \frac{2(b-a)^2 L_\text{NTK}^2}{N^2}\right) \qquad (\because \eqref{NormIneq1}). \label{ThmLip1}
}

In addition,
\eqn{
\norm{f - f_\text{CK}}_1^2 &\leq& 4\left(\norm{f - f_\text{CK}}_0^2 + \frac{2(b-a)^2 L_\text{CK}^2}{N^2}\right) \qquad (\because \text{Lemma \ref{LemLip}}) \nonumber\\
&\leq& 4\left(\frac{1}{(1-\beta)^2}\norm{f - f_\text{NTK}}_0^2 + \frac{2(b-a)^2 L_\text{CK}^2}{N^2}\right) \qquad (\because \text{Lemma \ref{LemCKNTKBd}}) \nonumber\\
&\leq& 4\left(\frac{\tau}{(1-\beta)^2}\norm{f - f_\text{NTK}}_1^2 + \frac{2(b-a)^2 L_\text{CK}^2}{N^2}\right) \qquad (\because \eqref{NormIneq1}). \label{ThmLip2}
}

Setting $D_1 = 4\tau$ and $D_2 = \frac{4\tau}{(1-\beta)^2}$ yields the desired \eqref{ThmLipRes1} and \eqref{ThmLipRes2}.

\end{proof}

\section{Logistic regression}\label{SecKLMathAnalysis}

\subsection{Preliminaries}\label{SubSecKLPrelim}

In this section, we compare the performance of the two kernels under consideration for logistic regression. For brevity of exposition, we consider the problem in two dimensions. Let $\{{\bs x}_{i,j}\}_{0 \leq i \leq N_1, 0 \leq j \leq N_2} \subset Z = [a_1,b_1] \times [a_2,b_2]$ be the training points given by \eqn{
\bs{x}_{i,j} = (a_1 + i\Delta x_1, a_2 + j \Delta x_2), \nonumber
} 
where $\Delta x_1 = (b_1-a_1)/N_1$ and $\Delta x_2 = (b_2-a_2)/N_2$. We further set $h = \left(\Delta x_1^2 + \Delta x_2^2\right)^{1/2}$. The test nodes $\{{\bs y}_{i,j}\}_{0 \leq i \leq M_1, 0 \leq j \leq M_2}$ are similarly chosen as
\eqn{
{\bs y}_{i,j} = (a_1 + i\Delta y_1, a_2 + j \Delta y_2), \nonumber
} 
where $\Delta y_1 = (b_1-a_1)/M_1$ and $\Delta y_2 = (b_2-a_2)/M_2$. We also assume that $\tau_k = M_k/N_k$ is an integer greater than one for $k = 1,2$. 

Following Section \ref{SecPrelim}, each training and test point is assigned a class labelled $0$ or $1$, and denoted respectively by $\chi_{ij}$ for every ${0 \leq i \leq N_1, 0 \leq j \leq N_2}$ and $\mu_{ij}$ for each ${0 \leq i \leq M_1, 0 \leq j \leq M_2}$. Given a kernel function $\text{ker}: Z \times Z \to \mathbb{R}$, we can define the kernel matrix 
\eqn{
H_\text{ker} = (\text{ker}(\bs{x}_{i,j} , {\bs{x}_{k,l}}))_{0 \leq i,k \leq N_1, 0 \leq j,l \leq N_2} \in \mathbb{R}^{N_1N_2 \times N_1N_2}, \nonumber
} 
with the indices arranged suitably. 

We note first that the NTK must necessarily outperform the CK on the training points, i.e.,
\eqn{
\ell_{0,\text{NTK}}\left(\bs{\alpha}^{*,\text{NTK}}\right) \leq \ell_{0,\text{CK}}\left(\bs{\alpha}^{*,\text{CK}}\right). \label{LogTrBound}
} 

This is most easily seen by rewriting \eqref{LogTrBound} in terms of the feature maps: since
\eqn{
\ell_{0,\text{CK}}\left(\bs{\alpha}^{*,\text{CK}}\right) = \widetilde{\ell}_{0,\text{CK}}\left(\bs{\beta}^{*,\text{CK}}\right), \nonumber
}
from \eqref{KLEquivLosses}, and $\Phi_\text{NTK} = \left(\Phi_\text{CK} \ \ \Phi_\text{E}\right)^\top$, setting $\bs{\gamma} = \left(\bs{\beta}^{*,\text{CK}} \ \ \bs{0}\right)^\top \in \mathbb{R}^{|\theta|}$ yields
\eqn{
\ell_{0,\text{CK}}\left(\bs{\alpha}^{*,\text{CK}}\right) = \widetilde{\ell}_{0,\text{CK}}\left(\bs{\beta}^{*,\text{CK}}\right) = \widetilde{\ell}_{0,\text{NTK}}\left(\bs{\gamma}\right) \geq \widetilde{\ell}_{0,\text{NTK}}\left(\bs{\beta}^{*,\text{NTK}}\right) = \ell_{0,\text{NTK}}\left(\bs{\alpha}^{*,\text{NTK}}\right). \label{LogTrBound0}
} 

In order to show that an equivalence result in the manner of Theorem \ref{ThmMono} holds for the test errors, we follow a similar strategy to that for the function regression problem in Section \ref{SecKRMathAnalysis}: 
\begin{itemize}
\item[(a)] identify two cases that permit a transfer of information from training to test nodes;

\item[(b)] extend \eqref{LogTrBound} to an equivalence result on the training nodes for the two kernels
\end{itemize}

These ingredients can then be combined to yield the desired conclusion.

\subsection{Another pair of lemmas}\label{SubSecLogTwoLemma}

We begin by noting that
\eqn{
\ell_{0,\text{ker}}(\bs{\alpha}^{*,\text{ker}}) \leq \ell_{1,\text{ker}}(\bs{\alpha}^{*,\text{ker}}) \label{LossIneq1}
}
simply by virtue of the fact that every training point is also a test point. To complete (a), we need to bound the error on the finer grid by that on the coarser grid. The quantity of interest is the linear combination of the feature map components
\eqn{
\psi_\text{ker}(\bs{z}) = \sum_{m = 1}^{|\theta|} \Phi_{\text{ker},m}(\bs{z})\bs{\beta}^{*,\text{ker}}_m \label{PsiDefn}
}
as it determines the size of the gap between $\bs{z} \in Z$ and the separating hyperplane in the feature space. However, we also need to keep track of the class label that should be associated with $\bs{z}$ as that determines how it shows up in the loss function \eqref{KLEqTrLoss}. Let $\eta:Z \to \{0,1\}$ be the true class label for every point (so that, e.g., $\chi_{ij} = \eta(\bs{x}_{i,j})$ and $\mu_{ij} = \eta(\bs{y}_{i,j})$), and set
\eqn{
\widehat{\psi}_\text{ker}(\bs{z}) = \left(1-2\eta(\bs{z})\right)\psi_\text{ker}(\bs{z}). \label{PsiHatDefn}
}

Using this function, we can write the minimized training and test losses as simply
\eqn{
\widetilde{\ell}_{0,\text{ker}} = \sum_{i = 0}^{N_1}\sum_{j = 0}^{N_2} \ln\left[1 + \exp\left(\widehat{\psi}_\text{ker}(\bs{x}_{i,j})\right)\right] \label{TrLoss2Psi}
}
and 
\eqn{
\widetilde{\ell}_{1,\text{ker}} = \sum_{i = 0}^{M_1}\sum_{j = 0}^{M_2} \ln\left[1 + \exp\left(\widehat{\psi}_\text{ker}(\bs{y}_{i,j})\right)\right], \label{TeLoss2Psi}
}
where we have suppressed the dependence on $\bs{\beta}^{*,\text{ker}}$ for clarity of exposition. For any $0 \leq i \leq N_1-1$ and $0 \leq j \leq N_2-1$, we also define
\eqn{
\Lambda_{ij} = \{\bs{x}_{i,j},\bs{x}_{i+1,j},\bs{x}_{i,j+1},\bs{x}_{i+1,j+1}\} \nonumber
}
to make it easier to navigate the training grid.

The two regimes that interest us are a corner maximum condition and Lipschitz continuity. We make these precise in the following lemmas.

\begin{lem}\label{LemLogCorMax}
Suppose that on every rectangle with corners $\Lambda_{ij}$, the function $\widehat{\psi}_\text{ker}$ achieves its maximum value on $\Lambda_{ij}$. Then,
\eqn{
\widetilde{\ell}_{1,\text{ker}}  \leq \tau_1\tau_2\widetilde{\ell}_{0,\text{ker}}, \label{LogCorMax} 
}
\end{lem}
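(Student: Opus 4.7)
The plan is to bound each test node's contribution to $\widetilde{\ell}_{1,\text{ker}}$ by the value of $\phi\circ\widehat{\psi}_\text{ker}$ (with $\phi(x):=\ln(1+e^x)$) at a single training corner of a rectangle containing that test node, and then count carefully so that each training node is charged at most $\tau_1\tau_2$ times.

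The first step is straightforward: since $\phi$ is strictly increasing on $\mathbb{R}$, writing the test loss \eqref{TeLoss2Psi} as $\widetilde{\ell}_{1,\text{ker}}=\sum_{k,l}\phi(\widehat{\psi}_\text{ker}(\bs{y}_{k,l}))$ and invoking the corner-maximum hypothesis on any rectangle $\Lambda_{ij}$ containing $\bs{y}_{k,l}$ yields $\phi(\widehat{\psi}_\text{ker}(\bs{y}_{k,l}))\le\phi(\widehat{\psi}_\text{ker}(p_{ij}^*))$ for any training corner $p_{ij}^*\in\Lambda_{ij}$ at which $\widehat{\psi}_\text{ker}$ is maximal on $\Lambda_{ij}$. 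The second step is pure bookkeeping: I would assign every test node to a unique rectangle containing it in such a way that each rectangle receives at most $\tau_1\tau_2$ test nodes, for instance via $(i,j)=(\min\{\lfloor k/\tau_1\rfloor,N_1-1\},\min\{\lfloor l/\tau_2\rfloor,N_2-1\})$ with a mild boundary rearrangement to enforce the uniform cap.

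The only genuinely delicate piece is the third step: I need to pick the maximizers $p_{ij}^*$ so that the map $(i,j)\mapsto p_{ij}^*$ is injective, because otherwise a single training node can be hit by as many as four rectangles and thereby produce an unwanted factor of four in the final bound. I would handle this via Hall's marriage theorem on the bipartite graph joining each rectangle $\Lambda_{ij}$ to all corners in $\Lambda_{ij}$ at which $\widehat{\psi}_\text{ker}$ is maximal. The marriage condition is automatic from the cartesian-grid geometry: the upper-right-corner map $(i,j)\mapsto(i+1,j+1)$ is already an injection from rectangles into training nodes, so any family of $k$ rectangles must touch at least $k$ distinct training corners. Combining the resulting injective matching with the first two steps produces $\widetilde{\ell}_{1,\text{ker}}\le\tau_1\tau_2\widetilde{\ell}_{0,\text{ker}}$, and this matching step is the only ingredient that requires any care beyond monotonicity and counting.
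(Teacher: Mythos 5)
Your first step coincides with the paper's, but both pieces of bookkeeping that you flag as the real content of the argument break down. First, the partition in your second step cannot exist: there are $(M_1+1)(M_2+1)=(\tau_1N_1+1)(\tau_2N_2+1)$ test nodes but only $N_1N_2$ rectangles, and $(M_1+1)(M_2+1)>M_1M_2=\tau_1\tau_2N_1N_2$, so no assignment of test nodes to enclosing rectangles — however the boundary is rearranged — can respect a uniform cap of $\tau_1\tau_2$ per rectangle. Second, and more seriously, Hall's condition for the bipartite graph you actually need (each rectangle joined only to the corners of $\Lambda_{ij}$ at which $\widehat{\psi}_\text{ker}$ is maximal) is not automatic. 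Your justification via the injection $(i,j)\mapsto(i+1,j+1)$ concerns the unrestricted corner graph: the upper-right corner need not be a maximizing corner, so it need not lie in the neighbourhood at all. Concretely, if an interior training node $\bs{x}_{i,j}$ has $\widehat{\psi}_\text{ker}(\bs{x}_{i,j})$ strictly larger than the values at its eight neighbouring training nodes — a situation fully compatible with the corner-maximum hypothesis — then it is the unique maximizing corner of all four rectangles incident to it, so that set $S$ of four rectangles has $|N(S)|=1<4$ and no system of distinct representatives exists. The matching step, which you correctly identify as the crux, therefore collapses rather than being ``automatic from the cartesian-grid geometry.''

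For comparison, the paper's proof takes exactly your first step (each test contribution is bounded by the maximum of $\ln\left[1+\exp\left(\widehat{\psi}_\text{ker}\right)\right]$ over the corners of an enclosing rectangle) and then passes directly to the factor $\tau_1\tau_2$ by counting test points against training points; it does not attempt the injective assignment you are after, so your concern about a training node being charged by several rectangles is a real one that the terse counting leaves implicit. The inexpensive repair, however, is not a matching argument but positivity: bound the corner maximum by the sum of $\ln\left[1+\exp\left(\widehat{\psi}_\text{ker}\right)\right]$ over the (at most four) corners, note that each closed rectangle contains at most $(\tau_1+1)(\tau_2+1)$ test nodes and each training node lies in at most four rectangles, and conclude $\widetilde{\ell}_{1,\text{ker}}\leq 4(\tau_1+1)(\tau_2+1)\,\widetilde{\ell}_{0,\text{ker}}$. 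This gives a larger constant than \eqref{LogCorMax}, but a constant depending only on $\tau_1,\tau_2$ is all that Theorems \ref{ThmLogCorMax} and \ref{ThmLogLip} actually require, whereas your route as written does not deliver any bound.
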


\begin{proof}
Every test point $\bs{y}_{k,l}$ belongs to some rectangle with corners $\Lambda_{k'l'}$. The contribution from this point to the test loss is then dominated by the contributions from the corner points, i.e.,
\eqn{
\ln\left[1 + \exp\left(\widehat{\psi}_\text{ker}(\bs{y}_{k,l})\right)\right] &\leq& \max_{k'\leq i \leq k'+1, l' \leq j \leq l'+1} \ln\left[1 + \exp\left(\widehat{\psi}_\text{ker}(\bs{x}_{i,j})\right)\right] \label{LemLogCorMax1},
}
with the result that
\eqn{
\sum_{k = 0}^{M_1}\sum_{l = 0}^{M_2} \ln\left[1 + \exp\left(\widehat{\psi}_\text{ker}(\bs{y}_{k,l})\right)\right] &\leq& \left(\frac{M_1}{N_1}\right)\left(\frac{M_2}{N_2}\right)\sum_{i = 0}^{N_1}\sum_{j = 0}^{N_2} \ln\left[1 + \exp\left(\widehat{\psi}_\text{ker}(\bs{x}_{i,j})\right)\right], \nonumber 
}
and hence
\eqn{
\widetilde{\ell}_{1,\text{ker}}  &\leq& \tau_1\tau_2\widetilde{\ell}_{0,\text{ker}}. \nonumber 
}

\end{proof}

We say that the labelling function $\eta$ possesses the {\it matching property} with respect to the given training and test nodes if every test point $\bs{y}_{k,l}$ is contained in an enclosing rectangle with corners $\Lambda_{k'l'}$ such that $\eta$ agrees with $\eta(\bs{y}_{k,l})$ on at least one corner point. In other words, we do not have the matching property if and only if there exists some test point $\bs{y}_{k,l}$ such that, for any rectangle with corners $\Lambda_{k'l'}$ that may enclose it, we have $\eta(\bs{y}_{k,l}) \notin \eta \left(\Lambda_{k'l'}\right)$.

\begin{lem}\label{LemLogLip}
Suppose that $\psi_\text{ker}$ is Lipschitz continuous on $D$, with Lipschitz constant $L_\text{ker}$, and $\eta$ possesses the matching property. Then, 
\eqn{
\widetilde{\ell}_{1,\text{ker}}  &\leq& \exp(hL_\text{ker})\tau_1\tau_2\widetilde{\ell}_{0,\text{ker}}. \label{LogLip}
}
\end{lem}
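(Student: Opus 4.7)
The plan is to mimic the structure of the proof of Lemma \ref{LemLogCorMax}: bound each summand of the test loss $\widetilde{\ell}_{1,\text{ker}}$ by a single carefully chosen corner summand of $\widetilde{\ell}_{0,\text{ker}}$, paying a multiplicative price of $\exp(hL_\text{ker})$ for the Lipschitz displacement and $\tau_1\tau_2$ for the density mismatch between the two grids.

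First, for each test point $\bs{y}_{k,l}$ I would invoke the matching property to select an enclosing rectangle $\Lambda_{k',l'}$ together with a corner $\bs{x}_{i,j}\in\Lambda_{k',l'}$ for which $\eta(\bs{x}_{i,j})=\eta(\bs{y}_{k,l})$; since both points lie in the same rectangle, $\|\bs{y}_{k,l}-\bs{x}_{i,j}\|\leq h$. Because the labels agree, the factor $(1-2\eta)$ in the definition $\widehat{\psi}_\text{ker}(\bs{z})=(1-2\eta(\bs{z}))\psi_\text{ker}(\bs{z})$ is common to both points, so the Lipschitz hypothesis on $\psi_\text{ker}$ immediately gives $|\widehat{\psi}_\text{ker}(\bs{y}_{k,l})-\widehat{\psi}_\text{ker}(\bs{x}_{i,j})|\leq hL_\text{ker}$, and in particular the one-sided bound $\widehat{\psi}_\text{ker}(\bs{y}_{k,l})\leq \widehat{\psi}_\text{ker}(\bs{x}_{i,j})+hL_\text{ker}$.

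The crux of the argument is the scalar inequality $\ln(1+e^{a+c})\leq e^c\ln(1+e^a)$, valid for every $a\in\mathbb{R}$ and $c\geq 0$. I would prove it by substituting $u=e^a>0$ and $v=e^c\geq 1$ and checking that $h(v):=v\ln(1+u)-\ln(1+uv)$ vanishes at $v=1$ and is non-decreasing thereafter, since $h'(v)=\ln(1+u)-u/(1+uv)\geq \ln(1+u)-u/(1+u)\geq 0$ by the classical estimate $\ln(1+u)\geq u/(1+u)$ for $u\geq 0$. Applied with $a=\widehat{\psi}_\text{ker}(\bs{x}_{i,j})$ and $c=hL_\text{ker}$, this yields the pointwise bound $\ln[1+\exp(\widehat{\psi}_\text{ker}(\bs{y}_{k,l}))]\leq \exp(hL_\text{ker})\ln[1+\exp(\widehat{\psi}_\text{ker}(\bs{x}_{i,j}))]$.

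Finally, I would bound the corner contribution by the maximum over the four corners of $\Lambda_{k',l'}$ and then sum over test points and rectangles using exactly the counting in the proof of Lemma \ref{LemLogCorMax}, producing the factor $\tau_1\tau_2$ and reconstructing the full training sum. I expect step three to be the main obstacle: without the matching property the sign $(1-2\eta)$ could flip between $\bs{y}_{k,l}$ and $\bs{x}_{i,j}$ and the two-sided Lipschitz estimate would no longer give the one-sided bound on $\widehat{\psi}_\text{ker}$ needed to apply the scalar inequality, so the Lipschitz hypothesis by itself would not suffice. The matching property is precisely what converts an absolute-value displacement into an exploitable upper bound.
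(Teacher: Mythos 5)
Your proposal is correct and follows essentially the same route as the paper: use the matching property to pick a same-label corner within distance $h$, transfer the Lipschitz bound to $\widehat{\psi}_\text{ker}$ (the paper does this via a without-loss-of-generality case split on the label rather than your sign-factor observation, but these are equivalent), apply the inequality $\ln(1+\rho x)\leq \rho\ln(1+x)$ for $\rho\geq 1$, $x\geq 0$, and finish with the same grid-counting as in Lemma \ref{LemLogCorMax}. The only difference is that you supply a short calculus proof of that scalar inequality, which the paper simply asserts as \eqref{LogIneq}.
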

\begin{proof}
For any test point $\bs{y}_{k,l}$, let $\Lambda_{k'l'}$ contain the corners of the enclosing rectangle that respect the matching property. Without loss of generality, we can assume that $\eta(\bs{y}_{k,l}) = \eta(\bs{x}_{k',l'}) = 0$.  We have
\eqn{
\psi_\text{ker}\left(\bs{y}_{k,l}\right) \leq \psi_\text{ker}\left(\bs{x}_{k',l'}\right) + hL_\text{ker}, \nonumber
}
and hence
\eqn{
\ln\left[1 + \exp\left(\psi_\text{ker}(\bs{y}_{k,l})\right)\right] &\leq& \ln\left[1 + \exp\left(\psi_\text{ker}\left(\bs{x}_{k',l'}\right)\right) \exp\left(hL_\text{ker}\right)\right] \nonumber\\
&\leq& \exp\left(hL_\text{ker}\right)\ln\left[1 + \exp\left(\psi_\text{ker}\left(\bs{x}_{k',l'}\right)\right)\right], \nonumber
}
where we used the fact that
\eqn{
\ln\left(1 + \rho x\right) \leq \rho\ln(1 + x) \label{LogIneq}
}
whenever $\rho \geq 1$ and $x \geq 0$. In the case $\eta(\bs{y}_{k,l}) = \eta(\bs{x}_{k',l'}) = 1$, we similarly obtain
\eqn{
\ln\left[1 + \exp\left(-\psi_\text{ker}(\bs{y}_{k,l})\right)\right] &\leq& \exp\left(hL_\text{ker}\right)\ln\left[1 + \exp\left(-\psi_\text{ker}\left(\bs{x}_{k',l'}\right)\right)\right], \nonumber
}
and, consequently,
\eqn{
\sum_{k = 0}^{M_1}\sum_{l = 0}^{M_2} \ln\left[1 + \exp\left(\widehat{\psi}_\text{ker}(\bs{y}_{k,l})\right)\right] &\leq& \exp\left(hL_\text{ker}\right)\left(\frac{M_1}{N_1}\right)\left(\frac{M_2}{N_2}\right)\sum_{i = 0}^{N_1}\sum_{j = 0}^{N_2} \ln\left[1 + \exp\left(\widehat{\psi}_\text{ker}(\bs{x}_{i,j})\right)\right] \nonumber
}
so that
\eqn{
\widetilde{\ell}_{1,\text{ker}}  &\leq& \exp\left(hL_\text{ker}\right)\tau_1\tau_2\widetilde{\ell}_{0,\text{ker}}. \nonumber 
}

\end{proof}

In combination with \eqref{LossIneq1}, Lemmas \eqref{LemLogCorMax} and \eqref{LemLogLip} allow us to move seamlessly between training and test losses. Note that the assumptions underlying these lemmas can only conceivably be satisfied by smooth $\psi_\text{ker}$, thus almost completely ruling out the suitability of ReLU activations. On the other hand, the Tanh activation function equips the kernels with Lipschitz continuity by default; the matching property is a technical assumption to rule out pathological cases where the label of a test point disagrees with those of all of its neighbouring training points.

\subsection{Performance comparison}\label{SubSecLogPerfComp}

Next, we address task (b) outlined at the end of Subsection \ref{SubSecKLPrelim}. Note that for any $0 \leq i \leq N_1$, $0 \leq j \leq N_2$, we have
\eqn{
\ln\left[1 + \exp\left(\widehat{\psi}_\text{CK}\left(\bs{x}_{i,j}\right)\right)\right] &=& \ln\left[1 + \exp\left(\widehat{\psi}_\text{CK}\left(\bs{x}_{i,j}\right) - \widehat{\psi}_\text{NTK}\left(\bs{x}_{i,j}\right)\right)\exp\left(\widehat{\psi}_\text{NTK}\left(\bs{x}_{i,j}\right)\right)\right] \nonumber\\
&\leq& \max\left\{1,\exp\left(\widehat{\psi}_\text{CK}\left(\bs{x}_{i,j}\right) - \widehat{\psi}_\text{NTK}\left(\bs{x}_{i,j}\right)\right)\right\}\ln\left[1 + \exp\left(\widehat{\psi}_\text{NTK}\left(\bs{x}_{i,j}\right)\right)\right] \nonumber
}
where we used \eqref{NDIneq} again. Set
\eqn{
\omega &=& \max_{0 \leq i \leq N_1, 0 \leq j \leq N_2} \exp\left(\widehat{\psi}_\text{CK}\left(\bs{x}_{i,j}\right) - \widehat{\psi}_\text{NTK}\left(\bs{x}_{i,j}\right)\right); \label{deltadefn}
}
it follows from \eqref{LogTrBound} that $\omega \geq 1$. We can therefore write
\eqn{
\widetilde{\ell}_{0,\text{CK}} \leq \omega \widetilde{\ell}_{0,\text{NTK}}. \label{LogTrBound2} 
}

Combining \eqref{LogTrBound} and \eqref{LogTrBound2} with \eqref{LossIneq1} and Lemmas \ref{LemLogCorMax} and \ref{LemLogLip} then yields the following two results.

\begin{thm}\label{ThmLogCorMax}
Suppose that on every rectangle with corners $\Lambda_{ij}$, the functions $\widehat{\psi}_\text{NTK}$ and $\widehat{\psi}_\text{CK}$ achieve their maximum values on $\Lambda_{ij}$. Then, there exist constants $C_1,C_2 \geq 1$ such that
\eqn{
C_1^{-1}\ell_{1,\text{NTK}}  \leq \ell_{1,\text{CK}}  \leq C_2 \ell_{1,\text{NTK}} \label{ThmCorMaxRes}
}
\end{thm}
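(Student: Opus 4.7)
The plan is to deduce the two-sided bound by chaining together the four ingredients already in hand: the training-level dominance of the NTK over the CK in \eqref{LogTrBound}, the training-level reverse bound \eqref{LogTrBound2} with constant $\omega$, the trivial inequality \eqref{LossIneq1} that training loss does not exceed test loss, and Lemma~\ref{LemLogCorMax}, which under the corner-maximum hypothesis converts a training loss into a test loss at the price of the factor $\tau_1\tau_2$. Throughout I use the equivalence $\ell_{\nu,\text{ker}}(\bs{\alpha}^{*,\text{ker}})=\widetilde{\ell}_{\nu,\text{ker}}(\bs{\beta}^{*,\text{ker}})$ from \eqref{KLEquivLosses} without further comment, so the identities carry over verbatim between the $\ell$- and $\widetilde{\ell}$-forms.

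For the upper bound, I would start on the CK side at the test grid, push it down to the training grid using Lemma~\ref{LemLogCorMax} applied to $\widehat{\psi}_\text{CK}$ (legitimate by hypothesis), swap the training loss for the NTK training loss via \eqref{LogTrBound2}, and then bound the NTK training loss above by the NTK test loss using \eqref{LossIneq1}. Concretely,
\begin{eqnarray*}
\ell_{1,\text{CK}} \;=\; \widetilde{\ell}_{1,\text{CK}}
\;\leq\; \tau_1\tau_2\,\widetilde{\ell}_{0,\text{CK}}
\;\leq\; \tau_1\tau_2\,\omega\,\widetilde{\ell}_{0,\text{NTK}}
\;\leq\; \tau_1\tau_2\,\omega\,\widetilde{\ell}_{1,\text{NTK}}
\;=\; \tau_1\tau_2\,\omega\,\ell_{1,\text{NTK}},
\end{eqnarray*}
so I can take $C_2 = \tau_1 \tau_2 \omega \geq 1$.

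For the lower bound, I would run the analogous chain starting on the NTK side, invoking Lemma~\ref{LemLogCorMax} for $\widehat{\psi}_\text{NTK}$ (again legitimate by hypothesis), then the training-level dominance \eqref{LogTrBound} in the direction $\widetilde{\ell}_{0,\text{NTK}} \leq \widetilde{\ell}_{0,\text{CK}}$, and finally \eqref{LossIneq1} on the CK side:
\begin{eqnarray*}
\ell_{1,\text{NTK}} \;=\; \widetilde{\ell}_{1,\text{NTK}}
\;\leq\; \tau_1\tau_2\,\widetilde{\ell}_{0,\text{NTK}}
\;\leq\; \tau_1\tau_2\,\widetilde{\ell}_{0,\text{CK}}
\;\leq\; \tau_1\tau_2\,\widetilde{\ell}_{1,\text{CK}}
\;=\; \tau_1\tau_2\,\ell_{1,\text{CK}},
\end{eqnarray*}
giving $C_1 = \tau_1 \tau_2 \geq 1$. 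Combining the two chains yields \eqref{ThmCorMaxRes}.

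There is no real obstacle here; the proof is a bookkeeping exercise in chaining inequalities. The only subtlety worth flagging is that the corner-maximum hypothesis must be available simultaneously for both $\widehat{\psi}_\text{NTK}$ and $\widehat{\psi}_\text{CK}$ (as stated), because one direction needs Lemma~\ref{LemLogCorMax} applied to the CK and the other direction needs it applied to the NTK. The constant $\omega$ from \eqref{deltadefn} encodes the data-dependent gap between the optimal CK and NTK fits on the training grid, and is exactly what makes the upper constant $C_2$ larger than $C_1$, mirroring the asymmetry seen in Theorem~\ref{ThmMono}.
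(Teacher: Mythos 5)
Your proposal is correct and follows essentially the same route as the paper's own proof: both directions chain Lemma~\ref{LemLogCorMax} with \eqref{LogTrBound} (resp.\ \eqref{LogTrBound2}) and \eqref{LossIneq1}, arriving at the identical constants $C_1 = \tau_1\tau_2$ and $C_2 = \tau_1\tau_2\omega$. Your explicit passage through the $\widetilde{\ell}$-forms via \eqref{KLEquivLosses} is a detail the paper leaves implicit, but the argument is the same.
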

\begin{proof}
We have
\eqn{
\ell_{1,\text{NTK}} &\leq& (\tau_1\tau_2)\ell_{0,\text{NTK}} \qquad (\because \text{Lemma }\ref{LemLogCorMax}) \nonumber\\
&\leq& (\tau_1\tau_2)\ell_{0,\text{CK}} \qquad (\because \eqref{LogTrBound}) \nonumber\\
&\leq& (\tau_1\tau_2)\ell_{1,\text{CK}} \qquad (\because \eqref{LossIneq1}). \label{ThmCorMax1}
}

Similarly,
\eqn{
\ell_{1,\text{CK}} &\leq& (\tau_1\tau_2) \ell_{0,\text{CK}} \qquad (\because \text{Lemma }\ref{LemLogCorMax}) \nonumber\\
&\leq& (\tau_1\tau_2\omega) \ell_{0,\text{NTK}} \qquad (\because \eqref{LogTrBound2}) \nonumber\\	
&\leq& (\tau_1\tau_2\omega) \ell_{1,\text{NTK}} \qquad (\because \eqref{LossIneq1}). \label{ThmCorMax2}
}

Setting $C_1 = \tau_1\tau_2$ and $C_2 = \tau_1\tau_2\omega$ and combining \eqref{ThmCorMax1} and \eqref{ThmCorMax2} yields \eqref{ThmCorMaxRes}.

\end{proof}

\begin{thm}\label{ThmLogLip}
Suppose that both $\psi_\text{NTK}$ and $\psi_\text{CK}$ are Lipschitz continuous on $D$ and $\eta$ possesses the matching property. Then, there exist constants $D_1,D_2 \geq 1$ such that
\eqn{
D_1^{-1}\ell_{1,\text{NTK}}  \leq \ell_{1,\text{CK}}  \leq D_2 \ell_{1,\text{NTK}} \label{ThmLogLipRes}
}
\end{thm}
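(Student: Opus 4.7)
The plan is to mirror the structure of the proof of Theorem \ref{ThmLogCorMax}, swapping in Lemma \ref{LemLogLip} for Lemma \ref{LemLogCorMax}. The chain of inequalities is the same three-step loop in each direction: (i) move from test loss to training loss via the Lipschitz lemma, (ii) compare the two training losses via \eqref{LogTrBound} or \eqref{LogTrBound2}, and (iii) move back from training loss to test loss via \eqref{LossIneq1}. Since both $\psi_\text{NTK}$ and $\psi_\text{CK}$ are assumed Lipschitz on $D$ and $\eta$ has the matching property, Lemma \ref{LemLogLip} applies to both kernels.

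Concretely, for the upper bound on $\ell_{1,\text{NTK}}$ in terms of $\ell_{1,\text{CK}}$ I would chain
\[
\ell_{1,\text{NTK}} \;\leq\; \exp(hL_\text{NTK})\,\tau_1\tau_2\,\ell_{0,\text{NTK}} \;\leq\; \exp(hL_\text{NTK})\,\tau_1\tau_2\,\ell_{0,\text{CK}} \;\leq\; \exp(hL_\text{NTK})\,\tau_1\tau_2\,\ell_{1,\text{CK}},
\]
using Lemma \ref{LemLogLip}, then \eqref{LogTrBound}, then \eqref{LossIneq1}. For the reverse bound I would chain
\[
\ell_{1,\text{CK}} \;\leq\; \exp(hL_\text{CK})\,\tau_1\tau_2\,\ell_{0,\text{CK}} \;\leq\; \exp(hL_\text{CK})\,\tau_1\tau_2\,\omega\,\ell_{0,\text{NTK}} \;\leq\; \exp(hL_\text{CK})\,\tau_1\tau_2\,\omega\,\ell_{1,\text{NTK}},
\]
using Lemma \ref{LemLogLip}, then \eqref{LogTrBound2}, then \eqref{LossIneq1}.

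Setting $D_1 = \exp(hL_\text{NTK})\,\tau_1\tau_2$ and $D_2 = \exp(hL_\text{CK})\,\tau_1\tau_2\,\omega$ then yields \eqref{ThmLogLipRes}; both constants are at least $1$ since $h, L_\text{ker} \geq 0$, $\tau_1,\tau_2 \geq 1$, and $\omega \geq 1$ by \eqref{LogTrBound}. There is really no obstacle to speak of: the two lemmas in Section \ref{SubSecLogTwoLemma} together with the training-loss equivalence captured by $\omega$ do all of the work, and this proof is a direct mechanical assembly. The only care needed is to remember that Lemma \ref{LemLogLip} must be applied to each kernel separately, so the Lipschitz constant appearing in $D_1$ is $L_\text{NTK}$ while the one in $D_2$ is $L_\text{CK}$, and to note that the matching property is assumed of $\eta$ (not of the kernels) so the same assumption suffices for both applications.
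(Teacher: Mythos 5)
Your proof is correct and follows essentially the same route as the paper: the identical three-step chain (Lemma \ref{LemLogLip}, then \eqref{LogTrBound} or \eqref{LogTrBound2}, then \eqref{LossIneq1}) in each direction. If anything, your bookkeeping is the more careful one, since you retain the $\tau_1\tau_2$ factor from Lemma \ref{LemLogLip} and introduce $\omega$ at the \eqref{LogTrBound2} step, whereas the paper's displayed chain omits $\tau_1\tau_2$ and attaches $\omega$ only in the final line, ending with $D_1 = e^{hL_\text{NTK}}$ and $D_2 = \omega e^{hL_\text{CK}}$; either choice of constants suffices for the stated conclusion.
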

\begin{proof}
As before, let $L_\text{ker}$ be the Lipschitz constant of $\psi_\text{ker}$ for both the kernels. We then have
\eqn{
\ell_{1,\text{NTK}} &\leq& e^{hL_\text{NTK}}\ell_{0,\text{NTK}} \qquad (\because \text{Lemma }\ref{LemLogLip}) \nonumber\\
&\leq& e^{hL_\text{NTK}}\ell_{0,\text{CK}} \qquad (\because \eqref{LogTrBound}) \nonumber\\
&\leq& e^{hL_\text{NTK}}\ell_{1,\text{CK}} \qquad (\because \eqref{LossIneq1}). \label{ThmLogLip1}
}

Similarly,
\eqn{
\ell_{1,\text{CK}} &\leq& e^{hL_\text{CK}} \ell_{0,\text{CK}} \qquad (\because \text{Lemma }\ref{LemLogLip}) \nonumber\\
&\leq& e^{hL_\text{CK}} \ell_{0,\text{NTK}} \qquad (\because \eqref{LogTrBound2}) \nonumber\\	
&\leq& \omega e^{hL_\text{CK}} \ell_{1,\text{NTK}} \qquad (\because \eqref{LossIneq1}). \label{ThmLogLip2}
}

Setting $D_1 = e^{hL_\text{NTK}}$ and $D_2 = \omega e^{hL_\text{CK}}$ and combining \eqref{ThmLogLip1} and \eqref{ThmLogLip2} yields \eqref{ThmLogLipRes}.

\end{proof}

\def \wida{0.8}
\def \widb{0.48}
\def \widcc{0.31}
\def \widc{0.43}

\section{Numerical tests}\label{SecNumTests}

In this section, we present numerical evidence to support the theoretical findings established in the Sections \ref{SecKRMathAnalysis} and \ref{SecKLMathAnalysis}. In addition, we identify myriad benefits of employing CK approximations over those coming from NN and NTK, including better conditioning, inexpensive gains in accuracy, and improved robustness. 

\subsection{Illustrative examples for regression and classification}
We first consider the approximation problem for smooth functions $f:[-1,1] \to \mathbb{R}$. As examples, we use
\eqn{
f_1(x) = e^{\sin(2\pi x)}, \qquad f_2(x) = e^{3x}, \qquad f_3(x) = \cos\left(e^{3x}\right). \label{KRExamples}
} 

We note that $f_3$ oscillates rapidly close to $x = 1$ so it poses a particularly stiff challenge to NN and kernel approximators alike. 

\begin{figure}[tbph]
\centering
\subfigure[$f_1(x) = e^{\sin(2\pi x)}$]
{\includegraphics[width=\widc\textwidth]{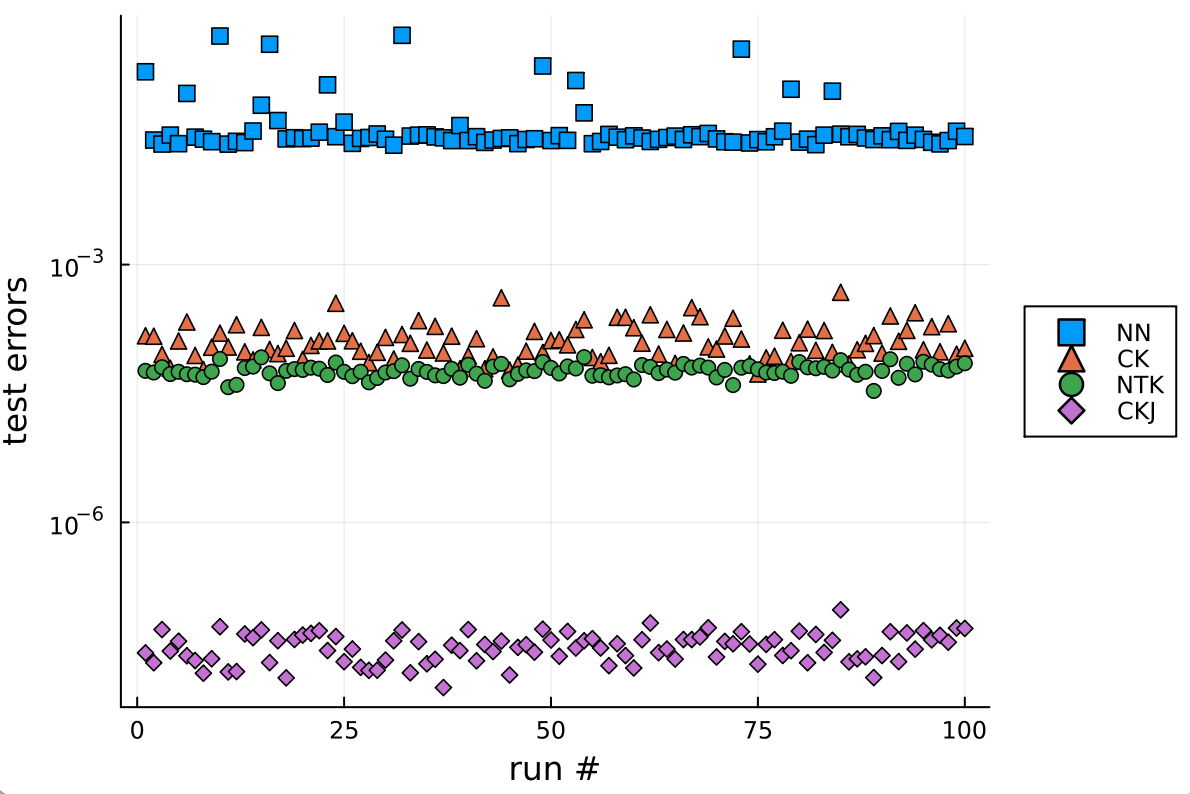}
\label{SFigKR1Tanh}
}
\subfigure[$f_2(x) = e^{3x}$]
{\includegraphics[width=\widc\textwidth]{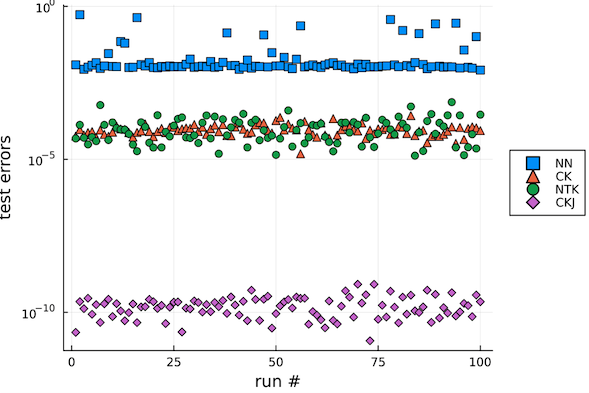}
\label{SFigKR2Tanh}
}
\subfigure[$f_3(x) = \cos\left(e^{3x}\right)$]
{\includegraphics[width=\widc\textwidth]{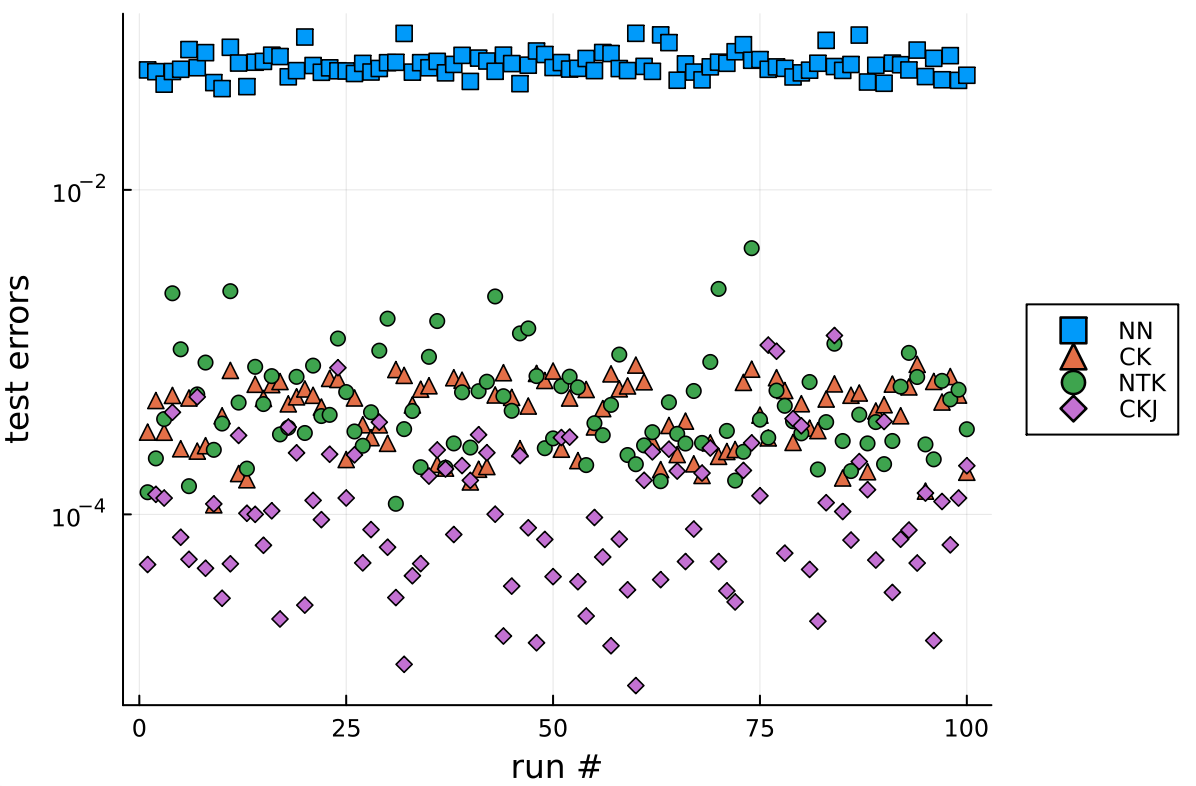}
\label{SFigKR3Tanh}
}
\caption{The results of function regression for 100 different NNs, trained for 2400 epochs, and the corresponding approximations using the NTK, CK, and CKJ extracted from the NN at the end of the training.}\label{FigKRTanh}
\end{figure}

The sizes of the training and testing grids, as defined in Subsection \ref{SubSecKRPrelim}, are fixed at $N = 200$ and $M = 600$ respectively, so $\tau = 3$. We train NNs of the form \eqref{NN} with $L = 3$ and $d_l = d = 128$ for $1 \leq l \leq 3$ for $2400$ epochs using the loss function defined in \eqref{RSqLoss} and  the weights given in \eqref{KRWts}. We employ the ADAM optimizer with the learning rate set at $10^{-3}$ and primarily use Tanh as the activation function.

After training an NN, we assemble the CK and NTK and compute the corresponding kernel approximations \eqref{KRApprox}. The details of the algorithm used for extracting the NTK are given in Appendix \ref{AppNTKComp}. The CK is easily obtained from the values of the last hidden layer and using \eqref{CKDefn}. Since the feature space dimension for the CK is $(d+1)$, assembling and using the corresponding Jacobian is also inexpensive. The results from using the Jacobian are indicated by CKJ in the following plots; a recipe for using this approximation is detailed in Appendix \ref{AppBFs}.

\begin{figure}[tbph]
\centering
\subfigure[$f_1(x) = e^{\sin(2\pi x)}$]
{\includegraphics[width=\widcc\textwidth]{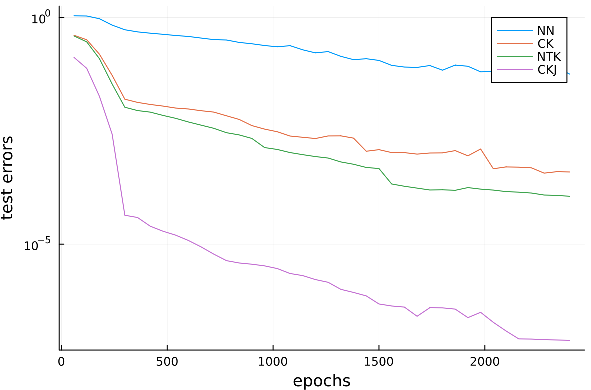}
\label{SFigKR1TanhEpochs}
}
\subfigure[$f_2(x) = e^{3x}$]
{\includegraphics[width=\widcc\textwidth]{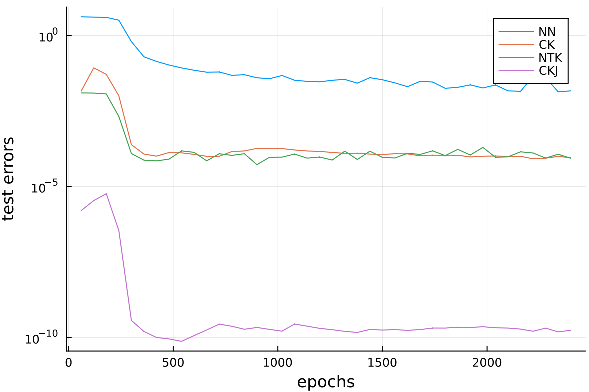}
\label{SFigKR2TanhEpochs}
}
\subfigure[$f_3(x) = \cos\left(e^{3x}\right)$]
{\includegraphics[width=\widcc\textwidth]{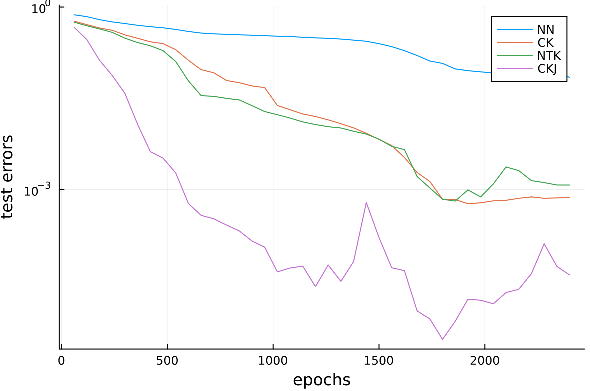}
\label{SFigKR3TanhEpochs}
}
\caption{The test errors for NNs over the course of being trained to approximate the given functions, and for the corresponding NTK, CK, and CKJ approximations. The errors are averaged over ten iterations to reduce the effects of random initialization.}\label{FigKRTanhEpochs}
\end{figure}

In Figure \ref{FigKRTanh}, we show the test errors for the example target functions for 100 runs of the procedure described above. Owing to randomness in the initialization, the results vary over iterations; nevertheless, a clear separation is evident between the NN test errors and those of the two kernel approximations. More importantly, we note that the latter pair are fairly similar so that, while one may appear to be superior over the other for a particular example (e.g., the NTK for $f_1$), the difference is only marginal compared to the supremacy both enjoy over the NN.  

Figure \ref{FigKRTanh} also shows that the CK Jacobian yields test errors that improve on the kernel approximations by several orders of magnitude. This is a consequence of the much better conditioning possessed by the former: assembling the kernel matrix $\widehat{H}_\text{ker}$ squares the singular values and applying its pseudo-inverse in \eqref{KRApprox} inflates round-off errors more than using the Jacobian does, with the result that the test errors plateau earlier. We emphasize that the use of the Jacobian is only afforded by the CK since the dimension of the feature space corresponding to the NTK is prohibitively large. 

\begin{figure}[tbph]
\centering
\subfigure[$f_1(x) = e^{\sin(2\pi x)}$]
{\includegraphics[width=\widc\textwidth]{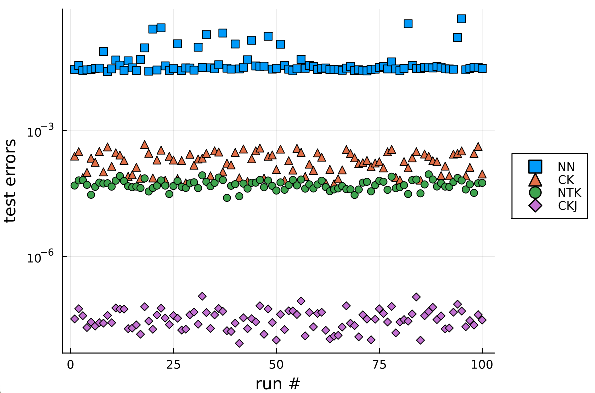}
\label{SFigKR1Tanh_256}
}
\subfigure[$f_2(x) = e^{3x}$]
{\includegraphics[width=\widc\textwidth]{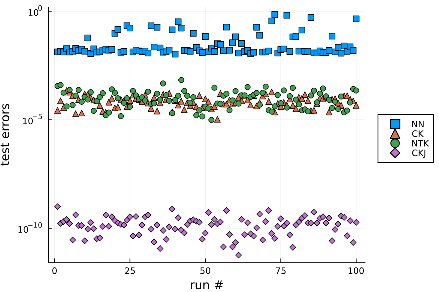}
\label{SFigKR2Tanh_256}
}
\subfigure[$f_3(x) = \cos\left(e^{3x}\right)$]
{\includegraphics[width=\widc\textwidth]{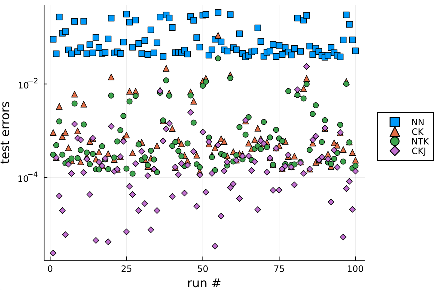}
\label{SFigKR3Tanh_256}
}
\caption{Function regression results for 100 NNs with the widths of the hidden layers set to 256, and the corresponding NTK, CK, and CKJ approximations.}\label{FigKRTanh_256}
\end{figure}

\begin{figure}[tbph]
\centering
\subfigure[$f_1(x) = e^{\sin(2\pi x)}$]
{\includegraphics[width=\widc\textwidth]{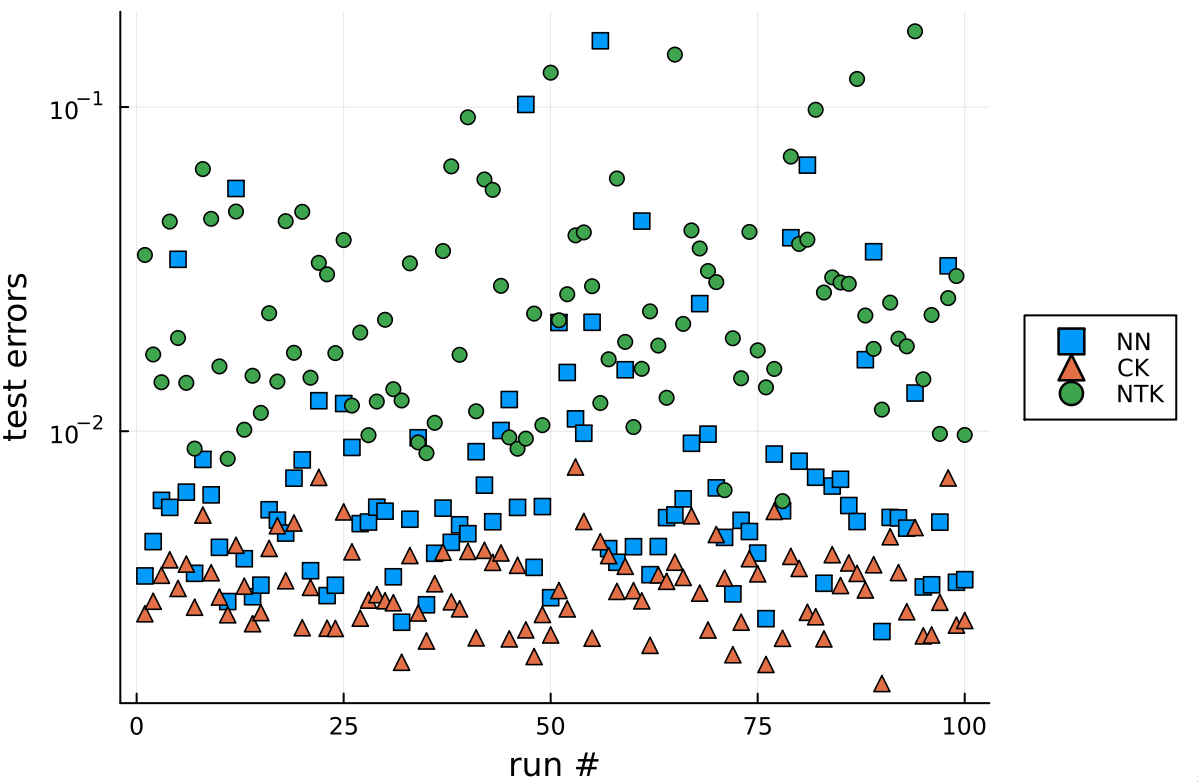}
\label{SFigKR1Relu}
}
\subfigure[$f_2(x) = e^{3x}$]
{\includegraphics[width=\widc\textwidth]{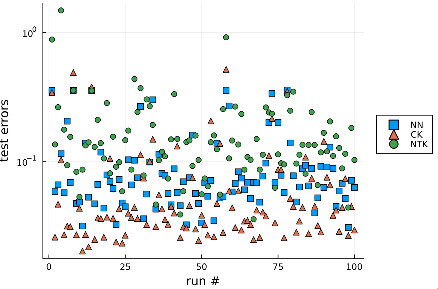}
\label{SFigKR2Relu}
}
\subfigure[$f_3(x) = \cos\left(e^{3x}\right)$]
{\includegraphics[width=\widc\textwidth]{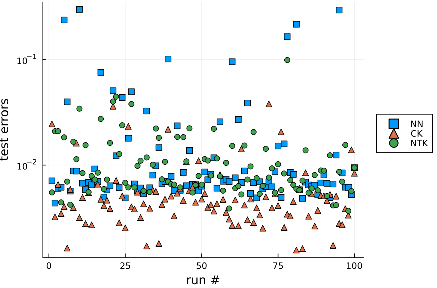}
\label{SFigKR3Relu}
}
\caption{The test errors for function regression for 100 trained NNs using the ReLU activation function, and the corresponding NTK and CK approximations.}\label{FigKRRelu}
\end{figure}

\begin{figure}[tbph]
\centering
\subfigure[$f_1(x) = e^{\sin(2\pi x)}$]
{\includegraphics[width=\widcc\textwidth]{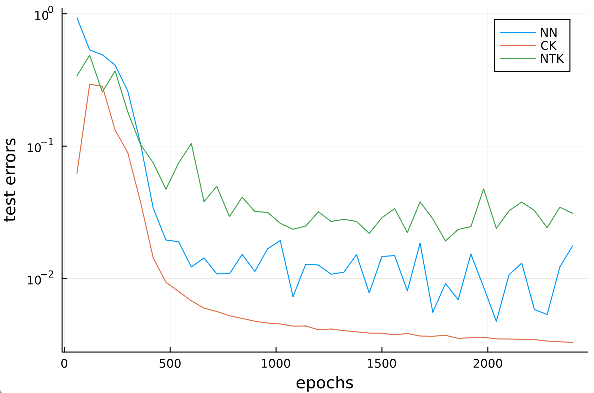}
\label{SFigKR1ReluEpochs}
}
\subfigure[$f_2(x) = e^{3x}$]
{\includegraphics[width=\widcc\textwidth]{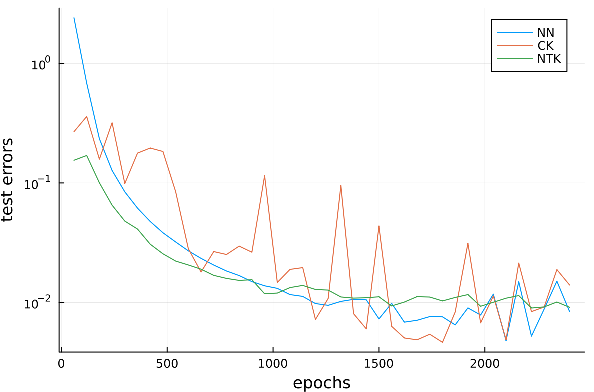}
\label{SFigKR2ReluEpochs}
}
\subfigure[$f_3(x) = \cos\left(e^{3x}\right)$]
{\includegraphics[width=\widcc\textwidth]{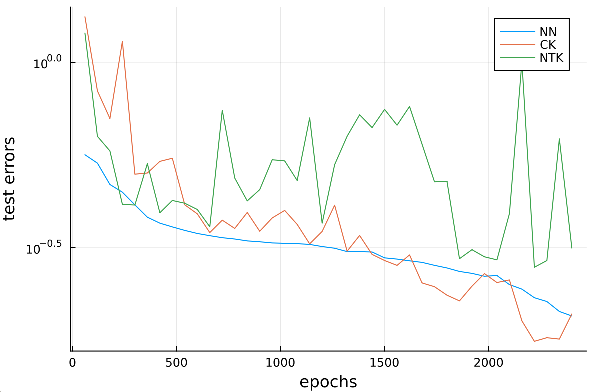}
\label{SFigKR3ReluEpochs}
}
\caption{Averaged test errors for function regression over the course of training ten NNs using ReLU activations, and the corresponding NTK and CK approximations.}\label{FigKRReluEpochs}
\end{figure}

These claims are further supported by the evolution of the errors over the course of the training, shown in Figure \ref{FigKRTanhEpochs}. The test errors are computed by averaging over ten iterations to reduce the effect of random initialization. While the NN errors can be seen to decay gradually, the kernel approximations do so more rapidly, particularly over the first few hundred epochs. The close resemblance between the two error profiles is noteworthy and supports our contention that the CK can serve as a proxy for the NTK. Finally, we note that, as expected, the CKJ errors decay even faster and level off at a threshold several orders of magnitude lower.

In Figure \ref{FigKRTanh_256}, we show that these conclusions are also valid when we repeat the experiments with the width of the hidden layers in the NN set to $d = 256$. In particular, this assuages any concerns that the beneficial properties enjoyed by the CK and CKJ approximations may be a consequence of solving an overdetermined least-squares problem: with the larger width, this is no longer the case, and the similarity in the corresponding error plots in Figures \ref{FigKRTanh} and \ref{FigKRTanh_256} establishes that our assertions are independent of NN width.

The same, however, cannot be said about the activation functions. Figure \ref{FigKRRelu} shows that using ReLU activations instead of Tanh leads to the erasure of the error segregation seen in the earlier diagrams. In addition, the NTK approximators paradoxically appear to perform the worst, with the CK results only slightly bettering the NN ones. Figure \ref{FigKRReluEpochs} highlights that this is generally the case over the course of the NN training as well. These observations can be explained as a combined effect of the over-parameterization provided by the NTK and $\mathcal{S}_\text{NTK}$ containing discontinuous functions (due to the derivatives of the ReLU activations; see Appendix \ref{AppNTKComp}, and \eqref{NTKcont} in particular, for more details). The resulting approximator manages to fit the training data very well but struggles with generalizing to the test data-points. In contrast, the $\mathcal{S}_\text{CK}$ consists only of continuous (albeit non-differentiable) functions, with the result that, while the approximation accuracy is lower than we observed with Tanh, it does not suffer from over-fitting. This can be regarded as indicative of the dangers associated with over-parameterization, the care that must be taken with ReLU-based approximators on unseen data, and the robustness possessed by the CK.

Next, we present the results for binary logistic regression on $Z = [-1,1]^2$. We set $N_1 = 11$, $N_2 = 7$, $M_1 = 22$ and $M_2 = 21$ (so $\tau_1 = 2$ and $\tau_2 = 3$) to specify the uniform training and testing grids (see Subsection \ref{SubSecKLPrelim} for the details). The labels are generated with respect to a known separating boundary $F(x_1,x_2) = 0$ for some $F:Z \to \mathbb{R}$, so the labelling function is $\eta(x_1,x_2) = \left(\text{sign}\left(F(x_1,x_2)\right)+1\right)/2$; we consider two examples:
\eqn{
F_1(x_1,x_2) = 4x_1^2 - 3x_1 + 5x_2 - 1, \quad F_2(x_1,x_2) = 2x_1^3 - 0.6x_1^2 - 1.94x_1 + x_2  + 0.2. \label{KLExamples}
}

\begin{figure}[tbph]
\centering
\subfigure[$F_1(x_1,x_2) = 4x_1^2 - 3x_1 + 5x_2 - 1 = 0$]
{{\includegraphics[width=\widb\textwidth]{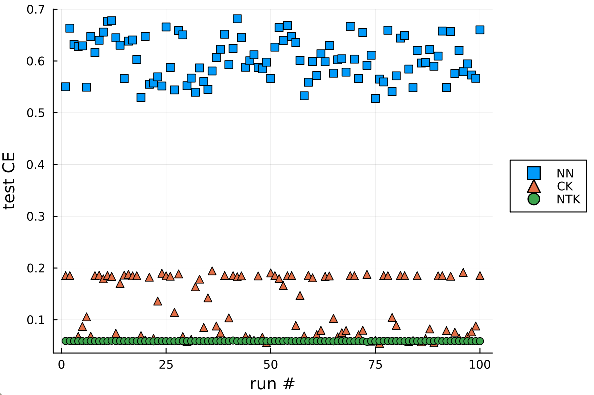}
\label{SFigKL1Tanh}
}
{\includegraphics[width=\widb\textwidth]{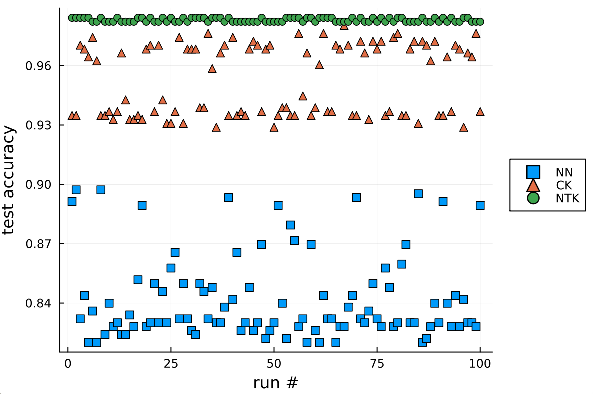}
\label{SFigKL1TanhAcc}
}}
\subfigure[$F_2(x_1,x_2) = 2x_1^3 - 0.6x_1^2 - 1.94x_1 + x_2  + 0.2 = 0$]
{{\includegraphics[width=\widb\textwidth]{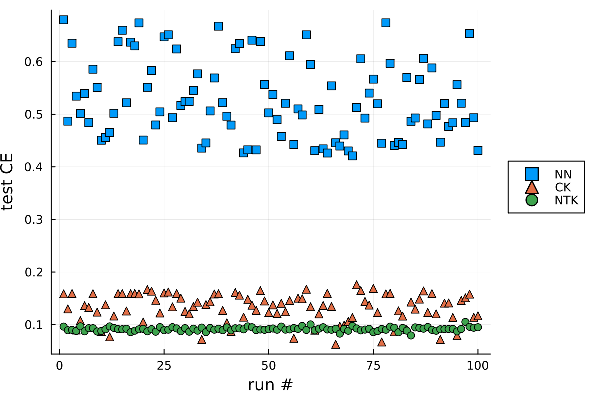}
\label{SFigKL2Tanh}
}
{\includegraphics[width=\widb\textwidth]{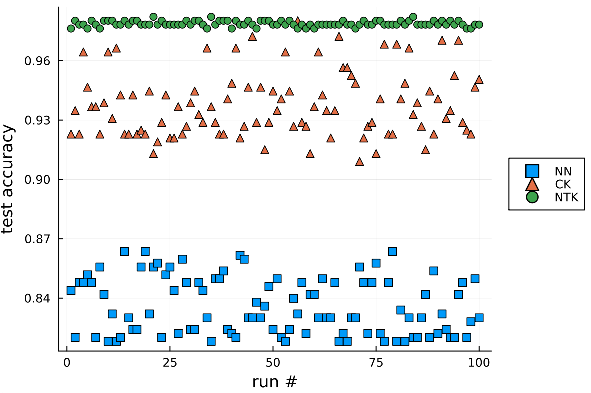}
\label{SFigKL2TanhAcc}
}}
\caption{The test cross-entropies and accuracies for logistic regression performed with 100 NNs, and the corresponding NTK and CK results.}\label{FigKLTanh}
\end{figure}

\begin{figure}[tbph]
\centering
\subfigure[$F_1(x_1,x_2) = 4x_1^2 - 3x_1 + 5x_2 - 1 = 0$]
{{\includegraphics[width=\widb\textwidth]{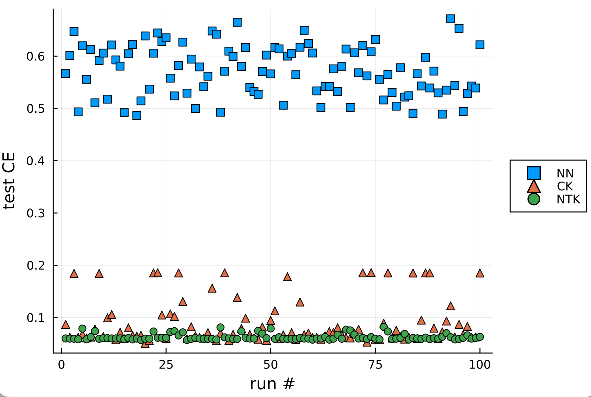}
\label{SFigKL1Tanh64}
}
{\includegraphics[width=\widb\textwidth]{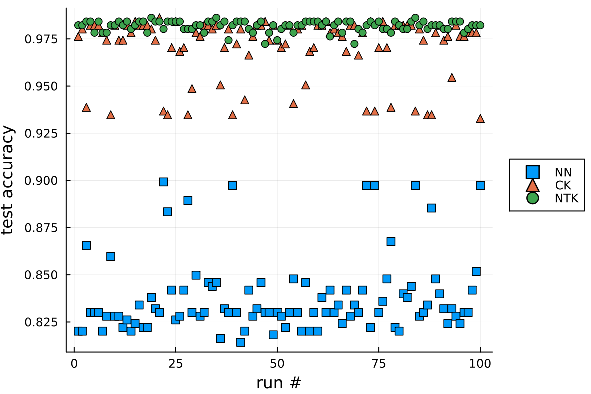}
\label{SFigKL1Tanh64Acc}
}}
\subfigure[$F_2(x_1,x_2) = 2x_1^3 - 0.6x_1^2 - 1.94x_1 + x_2  + 0.2 = 0$]
{{\includegraphics[width=\widb\textwidth]{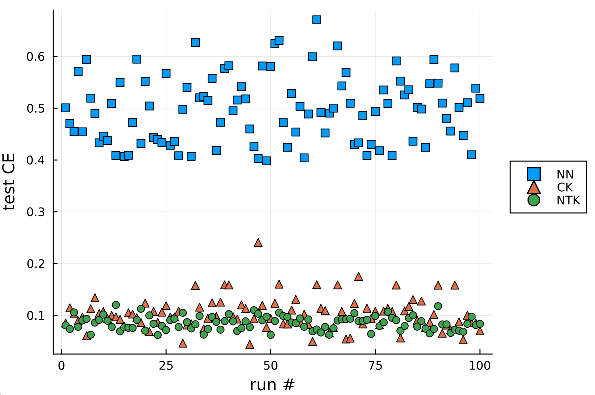}
\label{SFigKL2Tanh64}
}
{\includegraphics[width=\widb\textwidth]{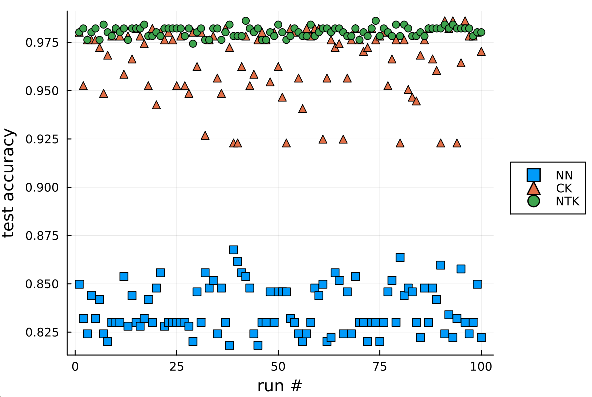}
\label{SFigKL2Tanh64Acc}
}}
\caption{The cross-entropies and accuracies for the test dataset for logistic regression performed with 100 NNs with hidden layer widths set to 64, and the corresponding NTK and CK results.}\label{FigKLTanh64}
\end{figure}

\begin{figure}[tbph]
\centering
\subfigure[$F_1(x_1,x_2) = 4x_1^2 - 3x_1 + 5x_2 - 1 = 0$]
{{\includegraphics[width=\widb\textwidth]{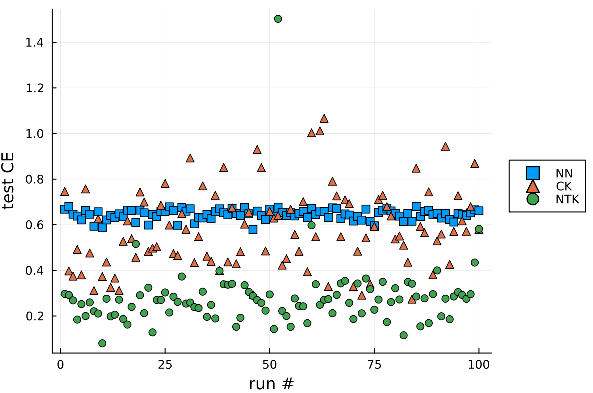}
\label{SFigKL1Relu}
}
{\includegraphics[width=\widb\textwidth]{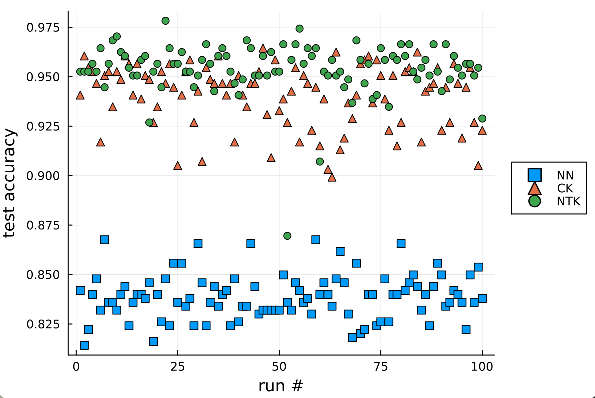}
\label{SFigKL1ReluAcc}
}}
\subfigure[$F_2(x_1,x_2) = 2x_1^3 - 0.6x_1^2 - 1.94x_1 + x_2  + 0.2 = 0$]
{{\includegraphics[width=\widb\textwidth]{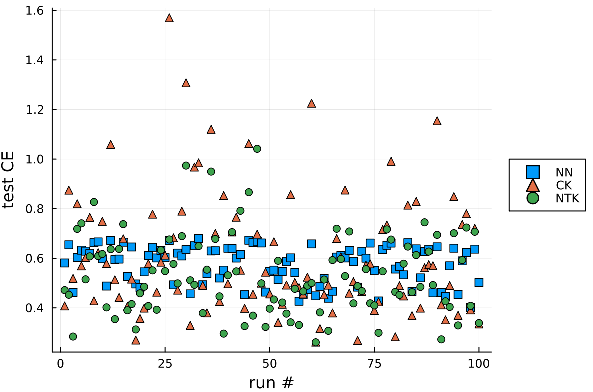}
\label{SFigKL2Relu}
}
{\includegraphics[width=\widb\textwidth]{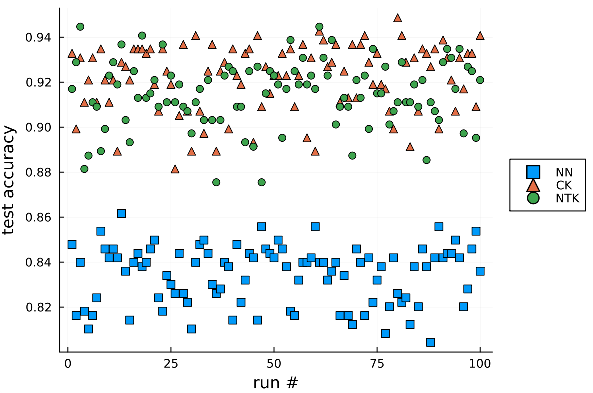}
\label{SFigKL2ReluAcc}
}}
\caption{Test results for 100 NNs using the ReLU activation function for logistic regression, and the corresponding NTK and CK metrics.}\label{FigKLRelu}
\end{figure}

\begin{figure}[tbph]
\centering
\subfigure[$F_1(x_1,x_2) = 4x_1^2 - 3x_1 + 5x_2 - 1 = 0$]
{{\includegraphics[width=\widb\textwidth]{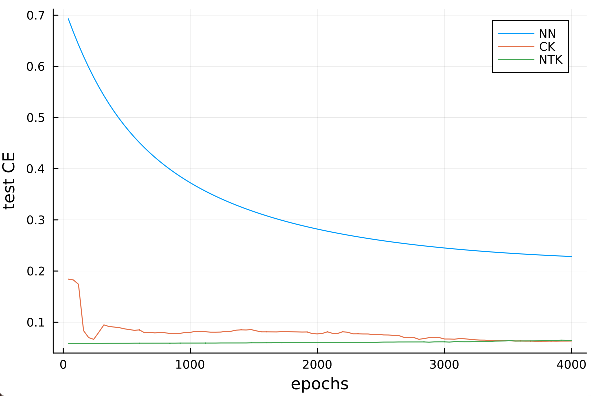}
\label{SFigKL1TanhEpochs}
}
{\includegraphics[width=\widb\textwidth]{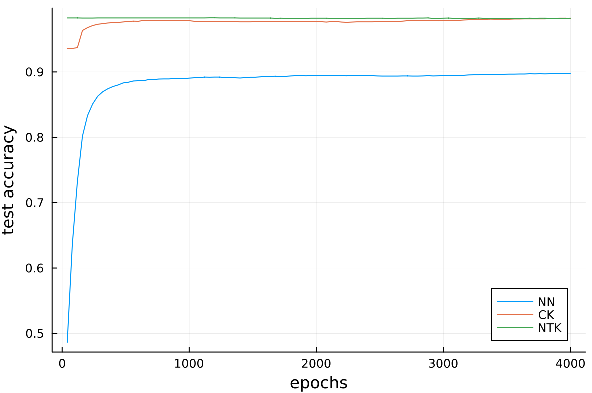}
\label{SFigKL1TanhEpochsAcc}
}}
\subfigure[$F_2(x_1,x_2) = 2x_1^3 - 0.6x_1^2 - 1.94x_1 + x_2  + 0.2 = 0$]
{{\includegraphics[width=\widb\textwidth]{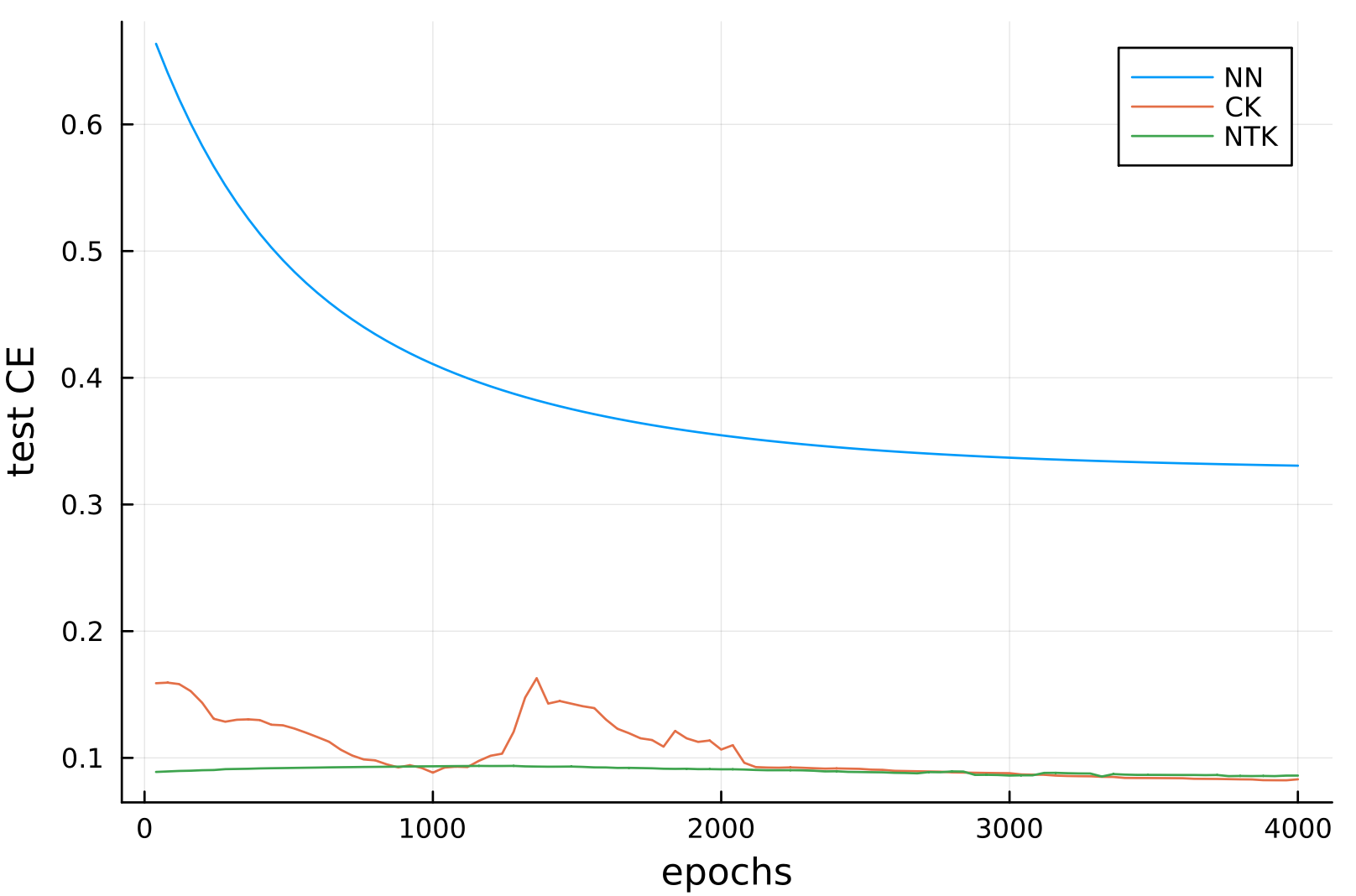}
\label{SFigKL2TanhEpochs}
}
{\includegraphics[width=\widb\textwidth]{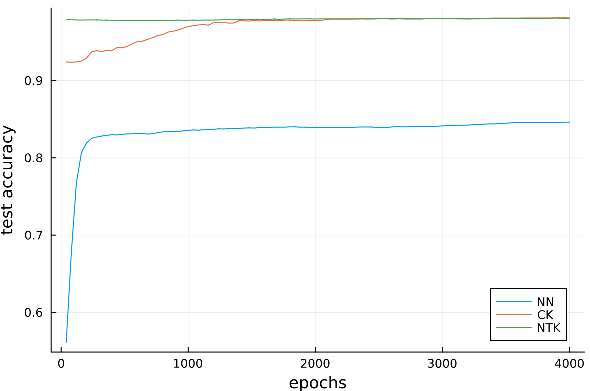}
\label{SFigKL2TanhEpochsAcc}
}}
\caption{Evolution of the test metrics, averaged over ten iterations, of NNs used to solve the logistic regression problem, and the corresponding NTK and CK results.}\label{FigKLTanhEpochs}
\end{figure}

We train NNs of the form \eqref{NN}, with $L = 2$ and $d_l = d = 128$ for $l = 1,2$, until the training accuracy (i.e., the proportion of correctly identified points) crosses $0.85$, or 4000 epochs have elapsed. We again employ the ADAM optimizer, with learning rate $10^{-5}$, and use Tanh as the primary activation function. The NTK and CK are extracted as detailed before and the respective training cross-entropies, defined in \eqref{KLTrLoss}, are minimized by using Newton's method to solve $\nabla_{\bs{\alpha}} \ell_{0,\text{ker}}(\bs{\alpha}) = \bs{0}$. We consider both cross-entropy loss values and classification accuracies as assessment metrics on the test points.

The plots in Figure \ref{FigKLTanh} show the results of 100 iterations of this procedure. A clear partition is visible in the performance metrics of the NN and the two kernels on the test datasets. At the same time, while the CK is generally outperformed by the NTK, the degree of improvement is dwarfed by the separation from the NN results. 

Figure \ref{FigKLTanh64} shows that this is still the case, even when the hidden widths of the NN are halved to 64, thus demonstrating that the CK performance is not simply a consequence of over-parameterization. However, Figure \ref{FigKLRelu} shows that the clear gap in performance vanishes when the activation function employed is ReLU. As in the case of function regression, this is explainable by over-fitting as a result of the volatile mix of over-parameterization and discontinuous feature map components. 

Finally, Figure \ref{FigKLTanhEpochs} shows the evolution of the performance metrics over the course of the NN training. We again show the average of ten iterations to reduce the effect of random initialization. The separation alluded to earlier is evident throughout, as is the relative proximity of the loss and accuracy values of the two kernels. Moreover, we note that the gaps shrink with training, so much so that the CK metrics are barely distinguishable from the NTK ones; the NN performance improves but a sizeable gulf nevertheless persists. In particular, this highlights a crucial point: even when the two kernels have been extracted from a minimally trained NN, they yield better results than the NN does at the end of training and when its metrics have plateaued. This underscores that using the CK (i.e., explicitly retraining only the last layer of the NN) is a low-cost recipe for significantly improving the accuracy of an NN.

\subsection{Physics-informed operator learning}
The viscous one-dimensional Burgers equation with periodic boundary conditions presented in \cite{wang2022improved, wang2021learning,howard2023multifidelity,howard2023stacked} presents a challenge for Deep Operator Networks (DeepONets) \cite{lu2021learning}. The system is given by
\begin{align}
  \textcolor{black}{  \frac{\partial s}{\partial t} + s \frac{\partial s}{\partial x} - \nu \frac{\partial ^2s}{\partial x^2}} &=    0, \; (x, t) \in [0, 1] \times [0, 1] \\
  s(x, 0) &= u(x), \; x\in[0, 1], \\
  s(0, t) &= s(1, t), \; t\in[0, 1], \\
  \frac{\partial s}{\partial x}(0, t) &= \frac{\partial s}{\partial x}(1, t), \; t\in[0, 1],
\end{align}
where $\nu$ is the viscosity. To provide a direct comparison, we run samples with hyperparameters as close as possible to \cite{wang2022improved}. We generate initial conditions $u(x)$ from a Gaussian random field $\sim \mathcal{N}(0, 25^2(-\Delta + 5^2I)^{-4})$, and the initial conditions are sampled at $P_{IC} = 101$ uniformly spaced locations on $x \in [0, 1]$, and the boundary conditions are randomly sampled at $P_{BC} = 100$ locations on $(x, t) = (0, t)$ and $(x, t) = (1, t)$. The residual is evaluated on $P_p = 2,500$ randomly sampled collocation points from the interior of the domain. The performance is tested by generating an additional $N_{test} = 500$ initial conditions $u(x)$ and simulating the solution with Matlab using the Chebfun \cite{Driscoll2014} package following the method in \cite{wang2022improved}. We train with $N=1000$ samples of the initial condition $u(x)$. 

In \cite{wang2022improved}, the use of the NTK to assign weights dynamically (for each point included in the loss function and for each training iteration) in front of the various terms in the loss function used to train the DeepONet was shown to perform better than using fixed predetermined weights. However, the evaluation of the NTK can be expensive. Alternatively, we can use the CK to assign the weights in the loss function.  Results comparing the CK, the NTK, and using predetermined fixed weights are shown in Table \ref{tab:Viscous_burgers_error}. The use of the CK not only leads to an improvement of the test accuracy compared to the NTK but also cuts the training time by around 62\% of the training time needed in the case of the NTK.

 \begin{table}[t!]
 \centering
 \caption{Comparison of the results and computational time for the viscous Burgers equation for $\nu = 0.001$. The reported error is the relative $\ell_2$ error over 500 randomly sampled initial conditions not seen in training. The results use local NTK and CK weights. as defined in \cite{wang2022improved} for the NTK. The run time is calculated using one node of an NVIDIA A100 GPU. }
 \begin{tabular}{lccc}
   Weights    & Fixed & NTK & CK \\
 \midrule
Error & $ 9.178 \% \pm 8.027 \% $ & $ 6.301 \% \pm 6.876 \% $ & $ 3.079 \% \pm 4.375 \% $ \\
Time (h) & 1.576 &  6.051 &  2.288 \\
 \bottomrule
 \end{tabular}
 \label{tab:Viscous_burgers_error}
 \end{table}

 \begin{figure}
 \centering
 \includegraphics[width=.8\linewidth]{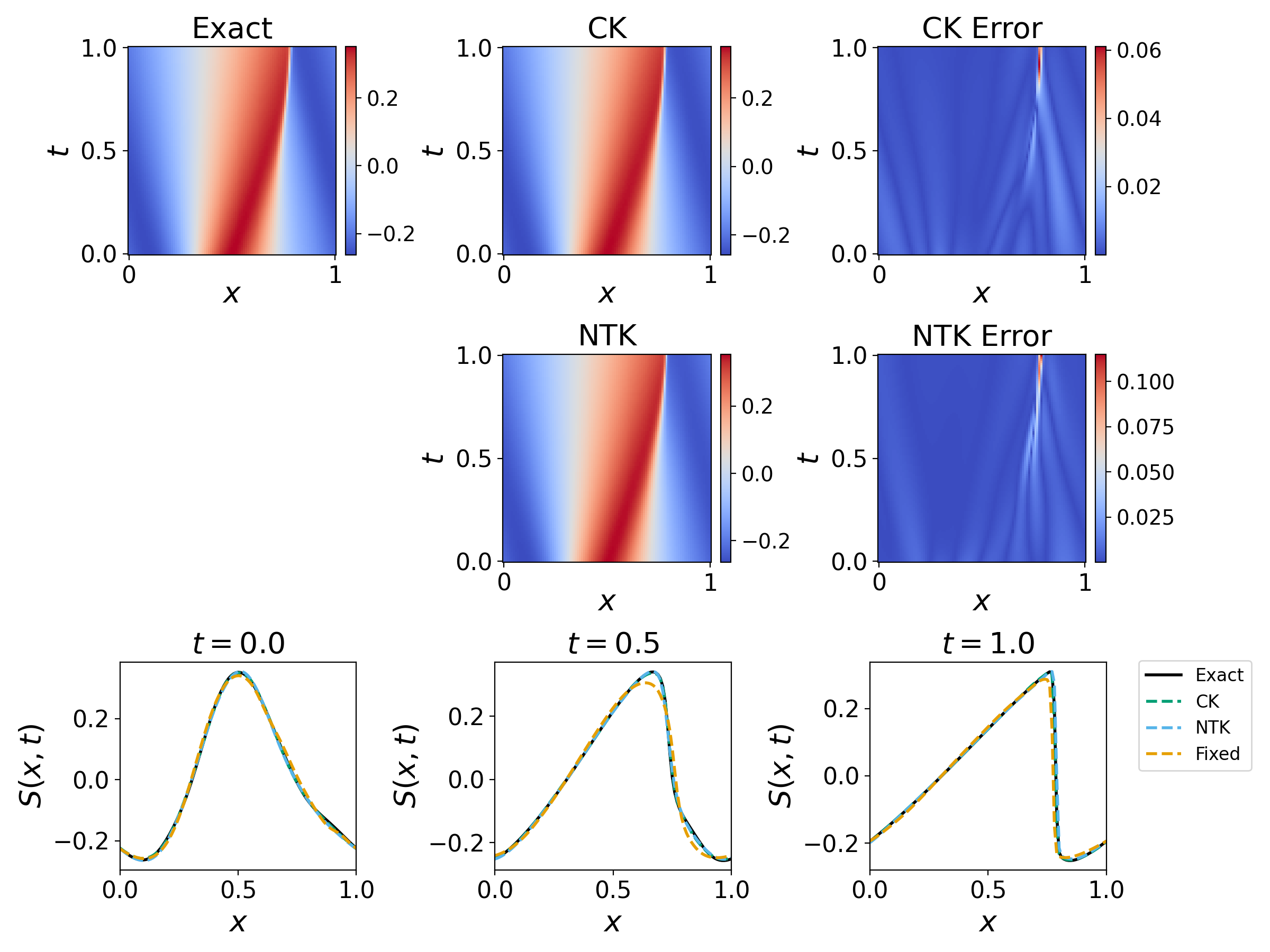}
 \caption{Results for viscous Burgers equation for $\nu = 0.001$.}
 \label{fig:Viscous_burgers_fig}
 \end{figure}

\subsection{Computer Vision Classification}

A small four layer CNN is trained to classify the first two classes of both the FMNIST and CIFAR  datasets. At regular intervals in training, the weights are stored. Offline, the stored weights are used to calculate the CK and NTK for the train and test datasets. Then, a kernel logistic regression model is trained on both the CK and NTK using the same train-test split as the neural network. We repeat this experiment 33 times with different seeds to evaluate the distribution of trained models, CKs, and NTKs. In figure~\ref{fig:LPvsNNCNN} we report the performance of the NN, NTK logistic regression model, and CK logistic regression model over the history of training. Similar to \cite{fort2020deep}, we find that the NTK logistic regression model quickly saturates to nearly the same final performance of the NN classifier. We also observe that the CK logistic regression model also becomes performant earlier in training than the NN model, but typically slower than the NTK model. However, given the computational burden of calculating the NTK compared to the CK (which we reiterate requires no-backward passes) we believe the CK could in fact be used to save the cost of training the NN. 

\begin{figure}
\centering
\includegraphics[width=.9\linewidth]{./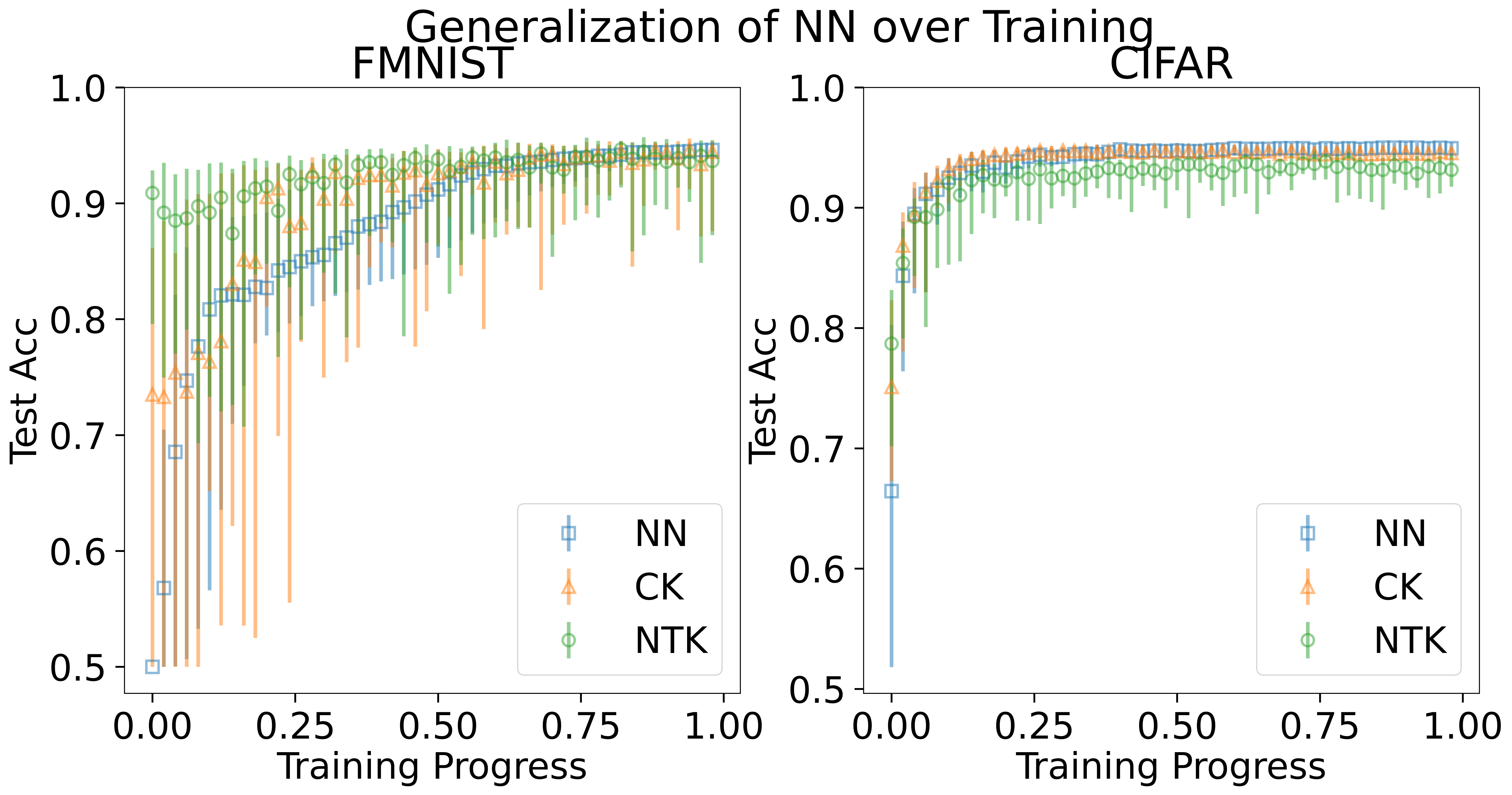}
\caption{A small CNN is trained to classify the first two classes of each the FMNIST and CIFAR datasets. After every epoch $t$, we freeze all weights except for the linear regression head and train the head for an additional 5 epochs (LP). We calculate the $\text{CK}_t$ and $\text{NTK}_t$ then train a kGLM then compare each model against the NN for generalization performance on the held out test data. We do this 33 times with different random seeds, uncertainty bars show the 1-std equivalent quantile. The figure shows across each model that after an initial feature learning phase, the NTK and CK kernel models become generalizable \textit{faster than} the neural network.}
\label{fig:LPvsNNCNN}
\end{figure}



\section{Application to Foundation Models}\label{SecFoundModels}

\begin{figure}
    \centering
\includegraphics[width=0.5\textwidth]{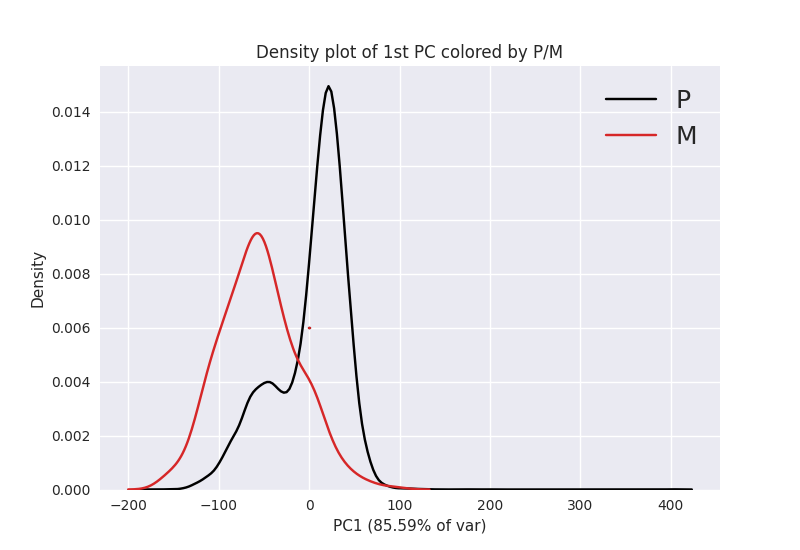}
    \caption{Kernel density estimate of the largest principal component of GPT-2's pre-trained final
embedding representation of Sentiment140 $P$ (the processed training set) and Sentiment140 $M$ (the manually curated test set).}
    \label{fig:pc1Dens}
\end{figure}

In this section, we demonstrate the effectiveness of our prescribed training strategy on a foundation model. Specifically, we make use of the pre-trained GPT-2 \cite{Radford2019LanguageMA}, Falcon 7B \cite{refinedweb}, and Llama-2 7B \cite{Touvron2023Llama2O} models for sentiment classification on a human-annotated subset of the Sentiment140 dataset consisting of 359 tweets, all of which are labelled either positive or negative based on the expressed sentiment\cite{Go2009TwitterSC}, rather than the entire Sentiment140 dataset due to a distribution shift between the train and test split as reported in \cite{TsouDistShift} and seen in Figure~\ref{fig:pc1Dens}. We evaluate two approaches: 
\begin{itemize}
\item[(FT)] We fine-tune by starting from pre-trained weights, attaching a randomly initialized classification head and train for ten epochs, and allowing \emph{all} the weights to update. 

\item[(LP)] We perform linear probing by extracting a feature vector at the final hidden layer and training a logistic regression classifier for ten epochs. 
\end{itemize}

We note that these procedures can be seen as analogues of the NN and CK training respectively studied in the earlier sections. In particular, both NN training and FT entail updates to all the parameters in the architecture, while the CK and LP approaches employ the feature map from the last hidden layer of a partially-trained architecture for logistic regression. Consequently, while the latter approach is significantly less expensive than the former, it can, as shown in the earlier sections, yield comparable (when LP substitutes for FT) or even a marked improvement in accuracy (when LP is combined with FT). 

We use pre-trained weights from Hugging Face for the transformer layers \cite{DBLP:journals/corr/abs-1910-03771}. Furthermore, to establish a baseline, we also perform linear probing with the LLMs initialized with random weights in the final classification head drawn from the truncated normal distribution $\mathcal{N}(0,0.02)$ \cite{Radford2018ImprovingLU},\cite{Radford2019LanguageMA}. All experiments are performed 30 times with the training and testing data shuffled. The averaged results, along with the standard errors, are reported in Table \ref{TabFoundMod} with further details about the hyperparameters provided in Appendix \ref{AppHyperpara}, Table \ref{tab:Hyperparameters}. We note that exclusively training the classification head (LP) on the in situ feature representation of the pre-trained model attains at least $96\%$ of the generalization performance of updating all of the weights (FT) for all three LLMs. QLoRA outperforms FP for all LLMs with nearly an order of magnitude increase in computational cost with a more modest memory saving.  

\begin{table}[t!]
    \centering
    \caption{Performance metrics of various training methods for Falcon 7B, Llama-2 7B and GPT-2 on classification of Sentiment140's manually curated data. Note that LP in this table corresponds to only training the logistic regression layer without any fine-tuning of the model. }
    \begin{tabular}{lccccc}
         Model & Method  & Train Acc. & Test Acc. & Time/Epoch (s) & GPU Memory (GB)\\
         \midrule
         Falcon 7B & FT & $99.95\pm0.02\%$& $91.8\pm0.9\%$& 103.9& 56.5\\
         & LP & $96.4\pm0.12\%$& $88.9\pm0.8\%$& 8.8& 14.3\\
         & QLoRA & $99.62\pm0.07\%$& $93.6\pm0.6\%$& 128.08& 7.4\\
         \midrule
         Llama-2 7B & FT & $99.81\pm0.05\%$& $90.5\pm 0.75\%$& 105.27& 53.5\\
         & LP & $97.17\pm0.12\%$& $ 87.4\pm 0.57\%$& 9.49& 13.8\\
         & QLoRA & $96.33\pm0.33\%$& $87.68\pm 0.79\%$& 135.5& 6.5\\
         \midrule
         GPT-2 & FT & $99.84\pm0.03\%$& $86.67\%\pm0.94\%$& 12.72& 3.7\\
         & LP & $93.18\pm0.24\%$& $85.91\pm0.94\%$ & 3.66& 1.87\\
         & QLoRA & $99.20\pm0.11\%$& $ 88.81\%\pm 0.65\%$& 29.03& 1.61\\
         \bottomrule
    \end{tabular}
    \label{TabFoundMod}
\end{table}

\begin{figure}
    \centering
    \includegraphics[width=0.3\linewidth]{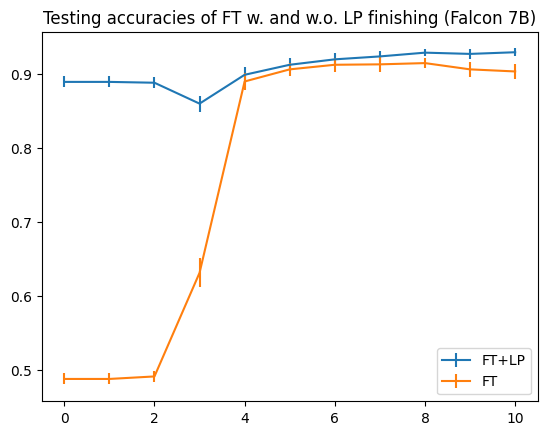}
    \includegraphics[width=0.3\linewidth]{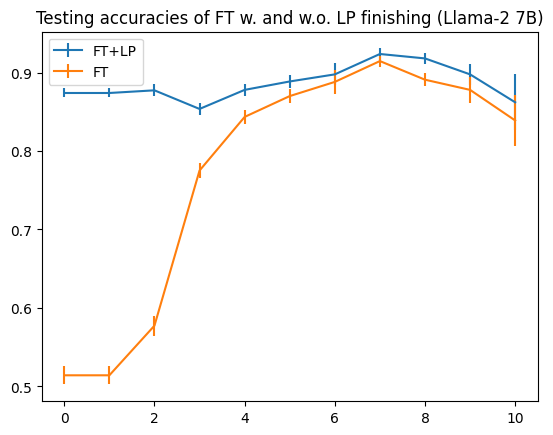}
    \includegraphics[width=0.3\linewidth]{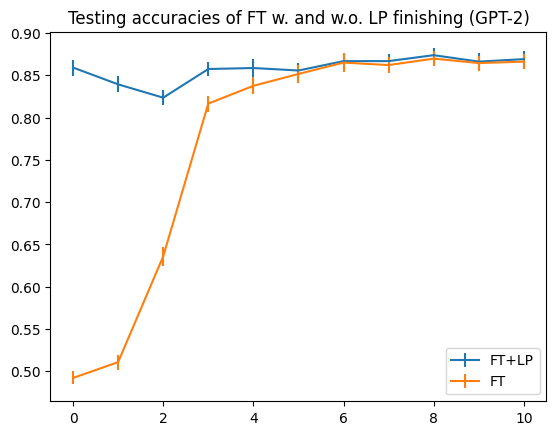}
    \includegraphics[width=0.3\linewidth]{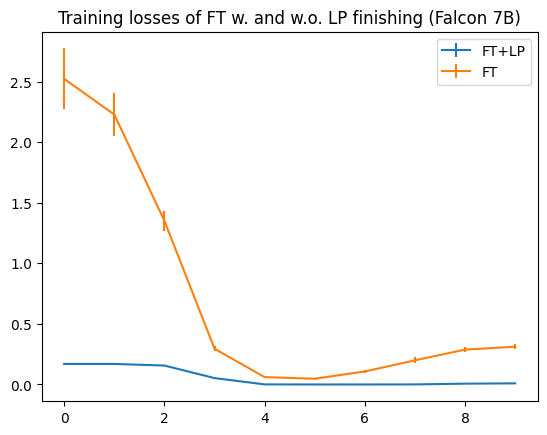}
    \includegraphics[width=0.3\linewidth]{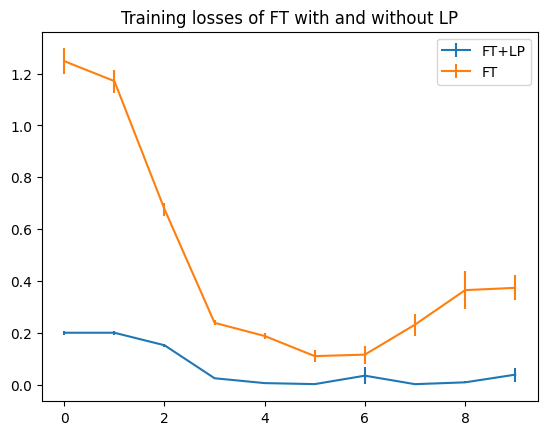}
    \includegraphics[width=0.3\linewidth]{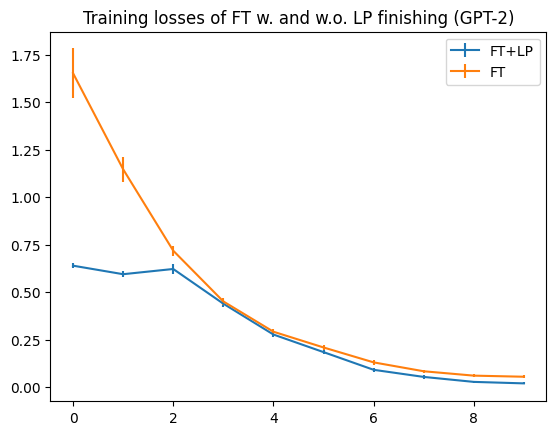}
    \caption{Test accuracy of training LLMs over 10 epochs. After each epoch, we freeze all weights except for the final layer and train the final layer for 10 additional epochs. We plot averages over 30 seeds, error bars indicating standard error of the mean. Note that LP in Table~\ref{TabFoundMod} 
    corresponds to the left most point (Epoch 0) while FT in Table~\ref{TabFoundMod} corresponds to the right most point (Epoch 10) for the testing accuracies.}  
    \label{fig:NN-finishedoff}
\end{figure}
Due to the short training time and low memory cost of training an LP, we investigate the effect of quickly training a linear probe after any given amount of FT training epochs (including the case of no FT given at epoch 0). Figure \ref{fig:NN-finishedoff} shows the average test accuracies of FT training after each epoch, compared to the performance of the model if we had stopped FT training and instead trained 10 epochs with LP. In all cases, we observe that on average, topping off the FT procedure with some additional LP epochs envelopes accuracy from the top as well as loss from below. These bounds on the test and training accuracies coupled with gains in wall clock performance as well as memory allocation (compared with FT) makes a strong argument for limiting the number of FT epochs in exchange for a top off of LP training. 


\section{Discussion}\label{SecDisc}
Despite possessing the universal approximation property in theory \cite{cybenko1989approximation,hornik1989multilayer} and enjoying the expressiveness that accompanies increased depth \cite{daubechies2022nonlinear}, in practice neural networks are stymied by optimization errors in their efforts to approximate given functions to arbitrary accuracy. Finding the optimal parameters requires the navigation of highly complex and non-convex loss landscapes and demands a large number of training epochs, thus driving up the training cost. Furthermore, the somewhat cumbersome architectures do not admit closed-form characterizations of the learned functions, making it harder to determine their generalization properties, e.g., assess them on unseen data or predict their stability under the addition of seemingly negligible noise.

Since the NTK controls the evolution during training of the learned function, it provides one way of supplying the missing characterization. In particular, an optimally trained NN, in the infinite width limit, can be shown to equal a kernel machine with a closed form known {\it a priori}. From \eqref{NN_NTK}, it follows that, for finite width networks, a kernel machine relying on the empirical NTK would necessarily outperform the NN on the training data. The numerical tests in Section \ref{SecNumTests} confirm that this is also the case on test datasets across different problems, network widths, and training stages, provided the Tanh activation function is used. However, evaluating the resulting approximations (e.g., \eqref{KRApprox} and \eqref{KLPred}) at a new point requires computing the NTK by pairing this point with every training point, which is a highly expensive undertaking for networks used in practical applications. Worse still, the NTK performance may be inferior to that of the NN when ReLU activations are employed (e.g., Figures \ref{FigKRRelu} and \ref{FigKLRelu}) due to discontinuities in its feature map components. Furthermore, the resulting sensitivity to small perturbations in the inputs may also leave these approximations singularly susceptible to adversarial attacks.

In this paper, we studied the properties of approximations relying on the CK and found, both theoretically and empirically, that their performance is closely tied to the NTK results, provided certain regularity conditions on the approximations are met. These are satisfied by default for the Tanh activation function, thus establishing that the gains in accuracy for the NTK over the NN are also inherited by the CK approximations. Importantly, the training cost of a CK machine is significantly lower than for continuing to train the NN, yielding a closed form formula for function regression \eqref{KRApprox} and a convex optimization problem for logistic regression \eqref{KLTrLoss}. In addition, its evaluation on new data-points is as inexpensive as evaluating the NN on the same points. As a result, the CK achieves a boost in accuracy over the NN comparable to that of the NTK while being strikingly cheaper to use than the latter and much easier to train than the former.

The CK also possesses features that, in some settings, make it even more favourable than the NTK. Due to the relatively low dimension of its feature space, the usage of its feature map form \eqref{KREqApprox} for function regression is computationally viable. This approximation is much better conditioned than the corresponding kernel form, with the result that round-off errors are not magnified excessively and the approximation errors can achieve a much lower plateau, as shown in Figures \ref{FigKRTanh}, \ref{FigKRTanhEpochs} and \ref{FigKRTanh_256}. In addition, the components of the feature map corresponding to the CK are continuous, even when using ReLU activations. As a consequence, the resulting approximations are less prone to overfitting and more robust to noise in the inputs.

We note that training a CK machine is equivalent to finding the optimal weights to attach to the neurons in the last hidden layer.  Our work then suggests that a low-cost strategy for improving the accuracy of a trained NN is to simply retrain the last layer of parameters. This recipe does not alter the form of the neurons (i.e., the feature map components for the CK) but, as we have shown, can lead to significant enhancements in approximation accuracy. The regularity of the activation function is a key determinant in the success of this strategy. Observe that the choice of the activation function does not appear to affect the NN approximation errors too much, but the kernel performance is closely tied to it (e.g., compare Figures \ref{FigKRTanh} and \ref{FigKRRelu}, and Figures \ref{FigKLTanh} and \ref{FigKLRelu}). This indicates that the effects of the regularity of the activation function are realized only on the kernel machines, and that the NN is largely agnostic to it. 

In addition to simple illustrative examples, we have tested the performance of CK compared to the NTK also for more elaborate cases. First, we used the CK approximation to replace the NTK as a way of assigning weights for  physics-informed operator learning (a regression task). Our numerical results show that the CK can lead to comparable performance to the NTK while reducing significantly the training time. Second, we used the CK and NTK to train surrogates for logistic regression for vision classification (FMNIST and CIFAR). We show that the NTK and CK both become performant earlier in training compared to the NN classifier. Even though the CK becomes performant more slowly than the NTK, the computational efficiency of CK compared to NTK makes a case for its use.

We also deployed our strategy on a foundation model for classification illustrates that freezing the feature map and training only the last layer of parameters yields performance that compares favourably with full architecture training. In addition, this approach might even lend improved robustness to the predictive capabilities in that the limited degrees of freedom resist overfitting on the training data. Our results therefore hint at the possibility that early stopping for model fitting might be beneficial in more ways than is currently surmised and that there might be subtle procedures to modulate stopping times across various parts of the network. Our recommended approach offers a computational speed up without the large trade-off in accuracy as seen in other efficient fine-tuning methods.

Finally, our results can also offer an explanation as to why feed-forward regression networks perform so well. In essence, the information captured in the last hidden layer's feature representation uniquely determines the CK. While lossy, the CK still preserves enough information from the global feature representation encoded in the NTK. And so by construction, these neuro-architectures learn and then distill the relevant information by the final hidden layer for the regression tasks. Our use of approximation theory not only offers an elegant reason as to why these networks work but also a clever way to enhance them as well.

\section*{Acknowledgements}
The work of SQ is supported by the Department of Energy (DOE) Office of Advanced Scientific Computing Research (ASCR) through the Pacific Northwest National Laboratory Distinguished Computational Mathematics Fellowship (Project No. 71268). The work of AD, MV, AE, and TC were partially supported by the Mathematics for Artificial Reasoning in Science (MARS) initiative via the Laboratory Directed Research and Development (LDRD) Program at PNNL. The work of AH and PS is partially supported by the U.S. Department of Energy, Office of Science, Advanced Scientific Computing Research program
under the Scalable, Efficient and Accelerated Causal Reasoning Operators, Graphs and Spikes for Earth and Embedded Systems (SEA-CROGS) project (Project No. 80278). The computational work was partially performed using PNNL Institutional Computing.
Pacific Northwest National Laboratory is a multi-program national
laboratory operated for the U.S. Department of Energy by Battelle Memorial Institute
under Contract No. DE-AC05-76RL01830.

\bibliography{NTKrefs}

\appendix

\section{Computing the NTK}\label{AppNTKComp}

Consider a fully-connected network (NN) with $L$ hidden layers and layer widths $(d_0, d_1, \hdots, d_{L}, d_{L+1})$ of the form given in \eqref{NN}. We rewrite it as
\eqn{
y^{(l)} &=& W^{(l)}x^{(l-1)} + b^{(l)}, \quad 1 \leq l \leq L+1, \nonumber\\
x^{(l)} &=& \sigma\left(y^{(l)}\right), \qquad 1 \leq l \leq L, \label{NNApp}
}
to introduce the pre-activation values $\{y^{(l)}\}$. As in Section \ref{SecPrelim}, $\sigma$ is the activation function, $W^{(l)} \in \mathbb{R}^{d_l \times d_{l-1}}$ and $b^{(l)} \in \mathbb{R}^{d_l}$ are the parameters, $x^{(0)} \in \mathbb{R}^{d_0}$ is the input, and $x^{(L+1)} = y^{(L+1)} \in \mathbb{R}^{d_{L+1}}$ is the output. Note that we do not apply the activation function in the last layer. Thus, this network represents a map $f_\text{NN}: \mathbb{R}^{d_0} \to \mathbb{R}^{d_{L+1}}$ by $f_\text{NN}(x^{(0)}) = x^{(L+1)}$.

Given two inputs $x,\hat{x} \in \mathbb{R}^{d_0}$, we define the NTK matrix for this network as
\eqn{
K(x,\hat{x}) = \left(K_{ij}(x,\hat{x})\right)_{1 \leq i,j \leq d_{L+1}}, \label{NTKmat}
}
where
\eqn{
K_{ij}(x,\hat{x}) = \sum_{l = 1}^{L+1} \left\{ \sum_{k = 1}^{d_l} \frac{\partial y^{({L+1})}_i}{\partial b^{(l)}_k} \frac{\partial \hat{y}^{({L+1})}_j}{\partial b^{(l)}_k}  + \sum_{k = 1}^{d_l} \sum_{m = 1}^{d_{l-1}} \frac{\partial y^{({L+1})}_i}{\partial W^{(l)}_{km}} \frac{\partial \hat{y}^{({L+1})}_j}{\partial W^{(l)}_{km}}  \right\}, \label{NTKdef}
}
with $y^{({L+1})} = f_\text{NN}(x)$ and $\hat{y}^{({L+1})} = f_\text{NN}(\hat{x})$.

For brevity, we denote by $\otimes$ the ``column-wise product'' between a matrix $A_{m \times n}$ and vector $b_{n \times 1}$ given by
\eqn{
A \otimes b := A\left(\text{diag}(b)\right); \label{Colwise}
}
the result is then an $m \times n$ matrix.

\subsection{Applying the chain rule}\label{SubSecChainRules}
Let $1 \leq r \leq {L+1}$. From \eqref{NN}, it follows that
\eqn{
\frac{\partial y_i^{(r)}}{\partial b^{(r)}_k} &=& \delta_{ik}, \quad 1 \leq i,k \leq d_{r}, \nonumber\\
\frac{\partial y_i^{(r)}}{\partial W^{(r)}_{km}} &=& \delta_{ik}x^{(r-1)}_m, \quad 1 \leq i,k \leq d_{r}, \ 1 \leq m \leq d_{r-1}. \label{PDrr}
}

Next note that, for $1 \leq r < s \leq {L+1}$, we have
\eqn{
\frac{\partial y_i^{(s)}}{\partial b^{(r)}_k} = \sum_{n = 1}^{d_{s-1}} \frac{\partial y_i^{(s)}}{\partial x^{(s-1)}_n} \frac{\partial x_n^{(s-1)}}{\partial b^{(r)}_k}, \qquad 1 \leq i \leq d_s. \label{PDbsr1}
}

Using $\frac{\partial y_i^{(s)}}{\partial x^{(s-1)}_n} = W^{(s)}_{in}$ and
\eqn{
\frac{\partial x_n^{(s-1)}}{\partial b^{(r)}_k}  = \frac{\partial x_n^{(s-1)}}{\partial y^{(s-1)}_n} \frac{\partial y_n^{(s-1)}}{\partial b^{(r)}_k} = \sigma'(y_n^{(s-1)}) \frac{\partial y_n^{(s-1)}}{\partial b^{(r)}_k} \nonumber 
}
in \eqref{PDbsr1}, we obtain
\eqn{
\frac{\partial y_i^{(s)}}{\partial b^{(r)}_k} = \sum_{n = 1}^{d_{s-1}} W^{(s)}_{in} \sigma'(y_n^{(s-1)}) \left[\frac{\partial y_n^{(s-1)}}{\partial b^{(r)}_k} \right]. \label{PDbsr2}
}

Similarly, we arrive at, for $1 \leq i \leq d_s$,
\eqn{
\frac{\partial y_i^{(s)}}{\partial W^{(r)}_{km}} = \sum_{n = 1}^{d_{s-1}} \frac{\partial y_i^{(s)}}{\partial x^{(s-1)}_n} \frac{\partial x_n^{(s-1)}}{\partial W^{(r)}_{km}} = \sum_{n = 1}^{d_{s-1}} W^{(s)}_{in} \sigma'(y_n^{(s-1)}) \left[\frac{\partial y_n^{(s-1)}}{\partial W^{(r)}_{km}} \right]. \label{PDWsr2}
}

Setting 
\eqn{
\frac{\partial y^{(s)}}{\partial b^{(r)}_{k}}  = \left(\frac{\partial y_i^{(s)}}{\partial b^{(r)}_{k}} \right)_{1 \leq i \leq d_s} , \quad \frac{\partial y^{(s)}}{\partial W^{(r)}_{km}}  = \left(\frac{\partial y_i^{(s)}}{\partial W^{(r)}_{km}} \right)_{1 \leq i \leq d_s} \nonumber
}
allows us to rewrite \eqref{PDbsr2} and \eqref{PDWsr2} as
\eqn{
\frac{\partial y^{(s)}}{\partial b^{(r)}_{k}} = [W^{(s)} \otimes \sigma'(y^{(s-1)})] \left[\frac{\partial y^{(s-1)}}{\partial b^{(r)}_k} \right] , \quad
\frac{\partial y^{(s)}}{\partial W^{(r)}_{km}} = [W^{(s)} \otimes \sigma'(y^{(s-1)})] \left[\frac{\partial y^{(s-1)}}{\partial W^{(r)}_{km}} \right]. \label{PDsr}
}

Applying these expressions iteratively and using \eqref{PDrr} yields
\eqn{
\frac{\partial y_i^{(s)}}{\partial b^{(r)}_{k}} = \left\{ [W^{(s)} \otimes \sigma'(y^{(s-1)})] [W^{(s-1)} \otimes \sigma'(y^{(s-2)})] \hdots [W^{(r+1)} \otimes \sigma'(y^{(r)})]\right\}_{ik} \label{PDbsr3}
}
and
\eqn{
\frac{\partial y_i^{(s)}}{\partial W^{(r)}_{km}} = \left\{ [W^{(s)} \otimes \sigma'(y^{(s-1)})] [W^{(s-1)} \otimes \sigma'(y^{(s-2)})] \hdots [W^{(r+1)} \otimes \sigma'(y^{(r)})]\right\}_{ik}x_m^{(r-1)}. \label{PDWsr3}
}

This implies that the contribution to the NTK from parameters $\{W^{(r)},b^{(r)}\}$ is 
\eqn{
(1 + x^{(r-1)}\cdot \hat{x}^{(r-1)}) \left\{ \Pi_{s = {L+1}}^{r+1} [W^{(s)} \otimes \sigma'(y^{(s-1)})]\right\} \left\{ \Pi_{s = {L+1}}^{r+1} [W^{(s)} \otimes \sigma'(\hat{y}^{(s-1)})]\right\}^\top, \label{NTKcont}
}
where the product indices are decreasing.

\subsection{Two algorithms}\label{SubSecAlgos}

The structure evident in \eqref{NTKcont} allows us to devise two efficient algorithms for computing the NTK. The first scheme proceeds in the more obvious way by iteratively assembling the matrices
\eqn{
\Pi_{s = {L+1}}^{r+1} [W^{(s)} \otimes \sigma'(y^{(s-1)})] , \quad  \Pi_{s = {L+1}}^{r+1} [W^{(s)} \otimes \sigma'(\hat{y}^{(s-1)})] \nonumber 
} 
moving backwards and computing their contributions to the NTK. This approach requires the storage of some intermediate terms. In contrast, the second algorithm we present proceeds in tandem with a forward pass and therefore does not require the storage of intermediate results.

\begin{algm}\label{algmB}
\begin{algorithmic}
\\
\State 1. Initialize $Y = \hat{Y} = I_{d_{L+1}}$ and set 
\eqn{
K = \left(1 + x^{(L)}\cdot \hat{x}^{(L)}\right)I_{d_{L+1}}. \nonumber
}
\State 2. For $l = {L+1}, L, \hdots, 2$, update
\eqn{
Y \to Y\left[W^{(l)}\otimes \sigma'(y^{(l-1)})\right], \quad \hat{Y} \to \hat{Y}\left[W^{(l)}\otimes \sigma'(\hat{y}^{(l-1)})\right] \label{Yupdates}
}
and update
\eqn{
K \to K + \left(1 + x^{(l-2)}\cdot \hat{x}^{(l-2)}\right)Y\hat{Y}^\top. \nonumber
} 
\end{algorithmic}
\end{algm}

Observe that this routine requires the storage of inner products $\{x^{(l)}\cdot \hat{x}^{(l)}\}_{0 \leq l \leq L}$ and the derivative of the activation function evaluated at the pre-activation values $\{y^{(l)}\}_{1 \leq l \leq L}$. The former only scales with the number of layers while the latter is the same as the number of hidden neurons. However, in the case that the activation function is ReLU, we can achieve considerable savings by noting that its derivative is the Heaviside function and storing the derivatives as logical variables. This can also be used to speed up the computations in \eqref{Yupdates} by only employing the matrix entries that correspond to positive pre-activation values. 

For a specific case, following \cite{novak2022fast}, assume that $d_0 = \hdots = d_{L} = W$ and $d_{L+1} = O$, and we need to compute the NTK for a batchsize of $N$ inputs. Then, Algorithm \ref{algmB} requires the storage of 
\begin{itemize}
\item the intermediate matrices $Y$ and $\hat{Y}$ at cost $NOW$,
\item the intermediate NTK matrix $K$ at cost $N^2O^2$,	
\item the layer-wise inner products $\{x^{(l)}\cdot \hat{x}^{(l)}\}$ at cost $N^2L$ (or at $NLW$ by storing the $\{x^{(l)}\}$ values),
\item the pre-activations evaluated at the derivative of the activation function at $NLW$. 
\end{itemize}

Thus, the total memory requirements are on the order of $NOW + N^2O^2 + N^2L + NLW$. In addition, the computational costs involve
\begin{itemize}
\item updates to the intermediate matrices $Y$ and $\hat{Y}$ at cost $NOLW^2$,
\item updates to the intermediate NTK matrix $K$ at cost $N^2O^2LW$,	
\end{itemize}
for a total of $NOLW^2 + N^2O^2LW$.

The second algorithm hinges on the observation that summing up terms of the form \eqref{NTKcont} admits a nested representation in the spirit of Horner's rule for polynomial evaluation. In particular, this is facilitated by the successive matrices $[W^{(l)} \otimes \sigma'(y^{(l-1)})]$ and $[W^{(l)} \otimes \sigma'(\hat{y}^{(l-1)})]^\top$ being appended on the outside.  

\begin{algm}\label{algmF}
\begin{algorithmic}
\\
\State 1. Initialize $K = (1 + x^{(0)} \cdot \hat{x}^{(0)})I_{d_1}$. 
\State 2. For $l = 2,\hdots,{L+1}$, update
\eqn{
K \to \left[W^{(l)} \otimes \sigma'(y^{(l-1)})\right]K\left[W^{(l)} \otimes \sigma'(\hat{y}^{(l-1)})\right]^\top + \left(1 + x^{(l-1)} \cdot \hat{x}^{(l-1)} \right)I_{d_{l}}. \nonumber 
}
\end{algorithmic}
\end{algm}

Note that this scheme only makes use of the pre- and post-activation values at each step while moving forward, requiring no entries to be stored from previous layers. Using the example considered earlier, this scheme requires the storage of the intermediate matrices $K$ at cost $N^2W^2$. The computational expense meanwhile is on the order of $N^2LW^3 + N^2OW^2 = N^2W^2(LW + O)$ for the matrix multiplications.

To sum up, we recommend the use of Algorithm \ref{algmB} if $O < W$ and Algorithm \ref{algmF} if $O > W$.

\section{Identifying the orthonormal basis}\label{AppBFs}
The kernel approximation \eqref{KRApprox} can be interpreted as an orthogonal projection on the space 
\eqn{
\mathcal{S}_\text{ker} = \text{span}\left(\left\{\Phi_{\text{ker},j} \right\}_{j = 1}^K\right) \nonumber
}
with respect to the inner product
\eqn{
\ip{h_1,h_2}_0 = \sum_{i = 0}^{N} h_1(\bs{x}_i)h_2(\bs{x}_i)w_i. \label{IPdefn}
} 

More precisely, given a target function $f:Z \to \mathbb{R}$, we have
\eqn{
\ip{\Phi_{\text{ker},l},f - f_\text{ker}}_0 = 0, \qquad 1 \leq l \leq K. \label{KerProj}
}

The optimal representation of this projection requires an orthonormal basis, which the given functions $\left\{\Phi_{\text{ker},j} \right\}_{1 \leq j \leq K}$ may not necessarily form with respect to the inner product. We note however that computing the SVD 
\eqn{
W^{1/2} \Xi_\text{ker}^\top = USV^\top \label{SVDFeature}
}
yields the values of a hierarchical orthonormal basis at the training nodes, as well as a change of basis formula: defining
\eqn{
u_j(\bs{z}) = s_j^{-1}\sum_{k = 1}^{K} {V}_{kj}\Phi_{\text{ker},k}(\bs{z}), \quad 1 \leq j \leq r, \label{COBform}
}
where $r$ is the rank of $\Xi_\text{ker}$, leads to $u_j(\bs{x_i}) = (W^{-1/2} {U})_{ij}$, with the result that
\eqn{
\ip{u_{j},u_l}_0 = \sum_{i = 0}^{N} u_j(\bs{x_i}) u_l(\bs{x_i})w_i = \left(U^\top U\right)_{jl} = \delta_{jl}. \label{OrthBasis}
}

Furthermore, we have
\eqn{
s_j^2 = \sum_{k = 1}^{K} \left|\ip{u_j,\Phi_{\text{ker},k}}_0 \right|^2, \label{SingVals}
}
so the monotonicity of the singular values $s_1 \geq s_2 \geq ...$ is indicative of the hierarchy in the orthonormal basis.

Computing the orthonormal basis functions does not necessarily require the Jacobian explicitly -- instead, they can also be inferred directly from the SVD $W^{1/2} H_\text{ker} W^{1/2} = U S^2 U^\top$: plugging $V = \Xi_\text{ker}W^{1/2} US^{-1}$ into \eqref{COBform} yields the formula
\eqn{
u_j(\bs{z}) = s_j^{-2} \ip{u_j , \text{ker}(\cdot , \bs{z})}_0, \label{COBform2}
}
although the division by the squares of the small singular values may corrupt the corresponding basis functions. The optimal representation orthogonal projection with respect to these basis functions is then given by
\eqn{
f_\text{ker}(\bs{z}) = \sum_{j = 1}^r \ip{f,u_j}_0 u_j(\bs{z}). \label{BFOrthogProj}
}

\section{Hyperparameters for sentiment analysis}\label{AppHyperpara}

Hyperparameters used for the linear probing and fine-tuning experiments for sentiment classification are shown in Table \ref{tab:Hyperparameters}. The loss function {\tt BCEWithLogitsLoss} combines a binary cross entropy loss and a sigmoid scaling operation. For QLoRA techniques, we used the {\tt bitsandbytes} package for quantization.

\begin{table}[!ht]
\centering
\begin{tabular}{lllll}
\toprule
Model & Hyperparameter & Linear Probing & Fine-Tuning & QLoRA \\
\midrule
All &Epochs & $10$ & $10$ &10\\
&Optimizer  & AdamW & AdamW & AdamW \\
&Batch Size & $1$ & $1$ & $1$\\
&Dropout & $0.1$ & $0.1$ & $0.1$ \\ 
&Gradient Norm Clip & $1.0$ & $1.0$ & 1.0 \\
&Seeds & $0-29$ & $0-29$& 0-29\\
&Loss Function & BCE+Logits & BCE+Logits & BCE+Logits\\ 
&Random Feature Init.  & \makecell[l]{Sample from $\mathcal{N}(0,0.02)$,\\ residual layers scaled by $\frac{1}{\sqrt{n}}$} & -- \\
&LoRA adaptors & -- & -- & All linear layers\\
&Quantization & -- & -- & \makecell[l]{NF4\\ Double quant.}\\
\midrule
GPT-2&Learning Rate & $10^{-2}$ & $10^{-5}$ & $10^{-4}$\\
&$\beta_1$ & $.9$ & $.9 $& $0.9$\\  
&$\beta_2$ & $.999$ & $.999$& $0.999$\\ 
&$\epsilon$ & $10^{-8}$ & $10^{-8}$ &$10^{-8}$\\ 
&Scheduler-Type & LinearLR & LinearLR& LinearLR\\ 
&Start Factor & $.33$ & $.33$ &0.33\\
&End Factor & $1.0$ & $1.0$ &1.0\\
&Scheduler Iterations & $5$ & $5$& 5\\
\midrule
Llama-2&Learning Rate & $10^{-2}$ & $5\times10^{-4}$& $10^{-4}$\\
&$\beta_1$ & $.9$ & $.9 $& $0.9$\\  
&$\beta_2$ & $.95$ & $.95$& $0.95$\\ 
&$\epsilon$ & $10^{-8}$ & $10^{-8}$& $10^{-8}$\\ 
&Scheduler-Type & Cosine & Cosine& Cosine\\ 
&Warm Up Steps & $150$ & $150$& $150$\\
\midrule
Falcon&Learning Rate & $10^{-2}$ & $5\times10^{-4}$ & $10^{-3}$\\
&$\beta_1$ & $.9$ & $.9 $ & $0.9$\\  
&$\beta_2$ & $.95$ & $.95$ & $0.95$\\ 
&$\epsilon$ & $10^{-8}$ & $10^{-8}$ & $10^{-8}$ \\ 
&Scheduler-Type & Cosine & Cosine & Cosine \\ 
&Warm Up Steps & $150$ & $150$ & $150$\\
\bottomrule
\end{tabular}
\caption{Hyperparameters used in the linear probing and fine-tuning approaches for sentiment classification.} \label{tab:Hyperparameters}
\end{table}


\end{document}